\newtheorem{theorem}{Theorem}
\newtheorem{lemma}{Lemma}
\newtheorem{definition}{Definition}
\newtheorem{assumption}{Assumption}
\newtheorem{remark}{Remark}
\def\eqref#1{equation~\ref{#1}}
\def\1{\bm{1}}
\def\vzero{{\bm{0}}}
\def\vone{{\bm{1}}}
\def\va{{\bm{a}}}
\def\vb{{\bm{b}}}
\def\vc{{\bm{c}}}
\def\vd{{\bm{d}}}
\def\ve{{\bm{e}}}
\def\vf{{\bm{f}}}
\def\vg{{\bm{g}}}
\def\vh{{\bm{h}}}
\def\vk{{\bm{k}}}
\def\vn{{\bm{n}}}
\def\vo{{\bm{o}}}
\def\vp{{\bm{p}}}
\def\vq{{\bm{q}}}
\def\vs{{\bm{s}}}
\def\vv{{\bm{v}}}
\def\vx{{\bm{x}}}
\def\vy{{\bm{y}}}
\def\vz{{\bm{z}}}
\def\mA{{\bm{A}}}
\def\mH{{\bm{H}}}
\def\mI{{\bm{I}}}
\def\mO{{\bm{O}}}
\def\mW{{\bm{W}}}
\def\mX{{\bm{X}}}
\def\mY{{\bm{Y}}}
\def\mZ{{\bm{Z}}}
\DeclareMathAlphabet{\mathsfit}{\encodingdefault}{\sfdefault}{m}{sl}
\SetMathAlphabet{\mathsfit}{bold}{\encodingdefault}{\sfdefault}{bx}{n}
\def\gA{{\mathcal{A}}}
\def\gI{{\mathcal{I}}}
\def\gS{{\mathcal{S}}}
\def\gT{{\mathcal{T}}}
\def\gV{{\mathcal{V}}}
\def\sI{{\mathbb{I}}}
\def\sN{{\mathbb{N}}}
\def\sR{{\mathbb{R}}}
\newcommand{\bA}{\overline{\mA}}
\newcommand{\pA}{\widetilde{\mA}}
\newcommand{\pa}{\tilde{\va}}
\newcommand{\bb}{\overline{\vb}}
\newcommand{\vDelta}{\mathbf{\Delta}}
\newcommand{\silu}{\mathrm{SiLU}}
\newcommand{\relu}{\mathrm{ReLU}}
\title{Exploring the Limitations of Mamba in COPY and CoT Reasoning}
 \author{Ruifeng Ren \ \ \ \ \ \ \ Zhicong Li  \ \ \ \ \ \ \ Yong Liu\thanks{Corresponding author} \\
         Gaoling School of Artificial Intelligence \\ Renmin University of China \\ Beijing, China \\
         \texttt{\{renruifeng920, zhicongli, liuyonggsai\}@ruc.edu.cn}
         }
\begin{document}
\maketitle
\begin{abstract}
Transformers have become the backbone of modern Large Language Models (LLMs); however, their inference overhead grows linearly with the sequence length, posing challenges for modeling long sequences. In light of this, Mamba has attracted attention for maintaining a constant inference size, with empirical evidence demonstrating that it can match Transformer performance in sequence modeling while significantly reducing computational costs. However, an open question remains: \textit{can Mamba always bring savings while achieving performance comparable to Transformers?} In this paper, we focus on analyzing the expressive ability of Mamba to perform our defined COPY operation and Chain of Thought (CoT) reasoning. First, inspired by the connection between Mamba and linear attention, we show that constant-sized Mamba may struggle to perform COPY operations while Transformers can handle them more easily. However, when the size of Mamba grows linearly with the input sequence length, it can accurately perform COPY, but in this case, Mamba no longer provides overhead savings. Based on this observation, we further analyze Mamba's ability to tackle CoT tasks, which can be described by the Dynamic Programming (DP) problems. Our findings suggest that to solve arbitrary DP problems, the total cost of Mamba is still comparable to standard Transformers. However, similar to efficient Transformers, when facing DP problems with favorable properties such as locality, Mamba can provide savings in overhead. Our experiments on the copy and CoT tasks further demonstrate Mamba’s limitations compared to Transformers in learning these tasks.
\end{abstract}

\section{Introduction}

Reccently, Transformer-based large language models (LLMs) have become the mainstream of modern neural network architectures due to their outstanding performance across a wide range of tasks \cite{transformer,bert,GPT2,vit,graphtransformer_survey}. 
However, the core component of Transformers—the attention layer—while providing excellent performance, also leads to emerging drawbacks: during training, the computational cost scales quadratically with sequence length, and during inference, the cost scales linearly with sequence length. This limitation becomes increasingly unacceptable when dealing with long sequence tasks. To address this issue, many works have attempted to improve the attention mechanism to reduce its time and memory costs \cite{efficient_transformers_survey,performer,elu+1,longformer,sparse_transformer}. However, these improved structures often achieve efficiency in the attention layer at the expense of some performance.

Faced with the scaling challenges of Transformers, the exploration of new model architectures to replace Transformers has gradually come into focus, leading to the development of modern RNN architectures, including RWKV~\cite{RWKV}, RetNet~\cite{RetNet}, and Mamba~\cite{Mamba}. Among them, the Mamba architecture \cite{SSM,Mamba}, based on the state space model~(SSM), has garnered attention for its performance comparable to Transformers in many sequence modeling tasks \cite{Mamba2} and vision tasks \cite{Mamba_vision,Mamba_vision_survey}. These models utilize hardware-aware algorithms during training, resulting in computational costs that scale linearly with sequence length, and require constant-level computation and memory during inference at each step. Mamba's strong performance and computational efficiency make it a strong competitor to Transformers.

Despite Mamba demonstrating excellent performance, one can not help but ask: Can Mamba always enjoy such "free lunch", that is,  can Mamba always bring overhead savings while solving tasks effectively?
More recent results have revealed Mamba's shortcomings in certain tasks, especially those involving model's retrieval ability~\cite{MQAR,MMLU,phonebook}.
Specifically, \citet{icl_different_models} study the in-context language learning capabilities of different models and find that Transformers outperformed other models, including Mamba, due to the specialized attention heads.
\citet{phonebook} also discover that Transformers are superior to Mamba on tasks that require copying from the input context.
\citet{mamba_icl} point out that Mamba struggles to retrieve vectors from the context of multi-query associative recall (MQAR) \citep{MQAR}, while Transformers can easily handle it well.
Furthermore, \citet{Mamba_large_ex} conduct experiments on larger models (up to 8B parameters) with a broader range of tasks, discovering that when it comes to in-context learning and recalling information from text, although Mambas can contain the same knowledge as Transformers, it will be more difficult for them to directly copy useful information from history.

Although there has been some empirical exploration, the theoretical investigation concerning the above "free lunch" question still remains open to explore.
In this paper, inspired by the comparison between Mamba and linear attention mechanism, we first focus on Mamba's ability to perform our defined COPY operation, which is closely related to the ability to retrieve information from context.
Our theoretical results suggest that constant-sized Mamba may struggle with the COPY operation due to its fixed inference cost that does not scale with sequence length, whereas Transformers handle it more easily. However, if Mamba's model size scales linearly with the sequence length, it becomes capable of performing the COPY operation.
Further, following the framework established by \citet{hedi_CoT, hedi_linear_attention}, we explore Mamba’s capability to reason via Chain-of-Thought (CoT), which can be formulated as dynamic programming (DP) problems.
We find that to solve arbitrary DP problems, Mamba and Transformers seem to be on equal footing in terms of inference cost; however, Mamba may offer savings when dealing with $m$-locality DP problems like efficient Transformers \citep{hedi_linear_attention}.
Our results can be concluded as follows:
\begin{itemize}
	\item  Inspired by the connection between linear attention and the SSM module, we investigate Mamba’s ability to perform the COPY operation, showing that constant-sized SSM modules are less effective than attention in this task, unless the model size scales linearly with the sequence length (in Section \ref{sec:copy});
	\item When equipped with CoT, the total cost required by Mamba to solve arbitrary DP problems is comparable to that of standard and efficient Transformers. However, when the DP problems have locality properties, Mamba can bring savings in overhead compared to standard Transformers (in Section \ref{sec:cot});
	\item We conduct experiments on both the copy and CoT tasks, demonstrating Mamba’s limitations compared to Transformers in learning these tasks (in Section \ref{sec:ex}).
\end{itemize}

\begin{figure*}[t]
	\centering
	\includegraphics[scale=0.30]{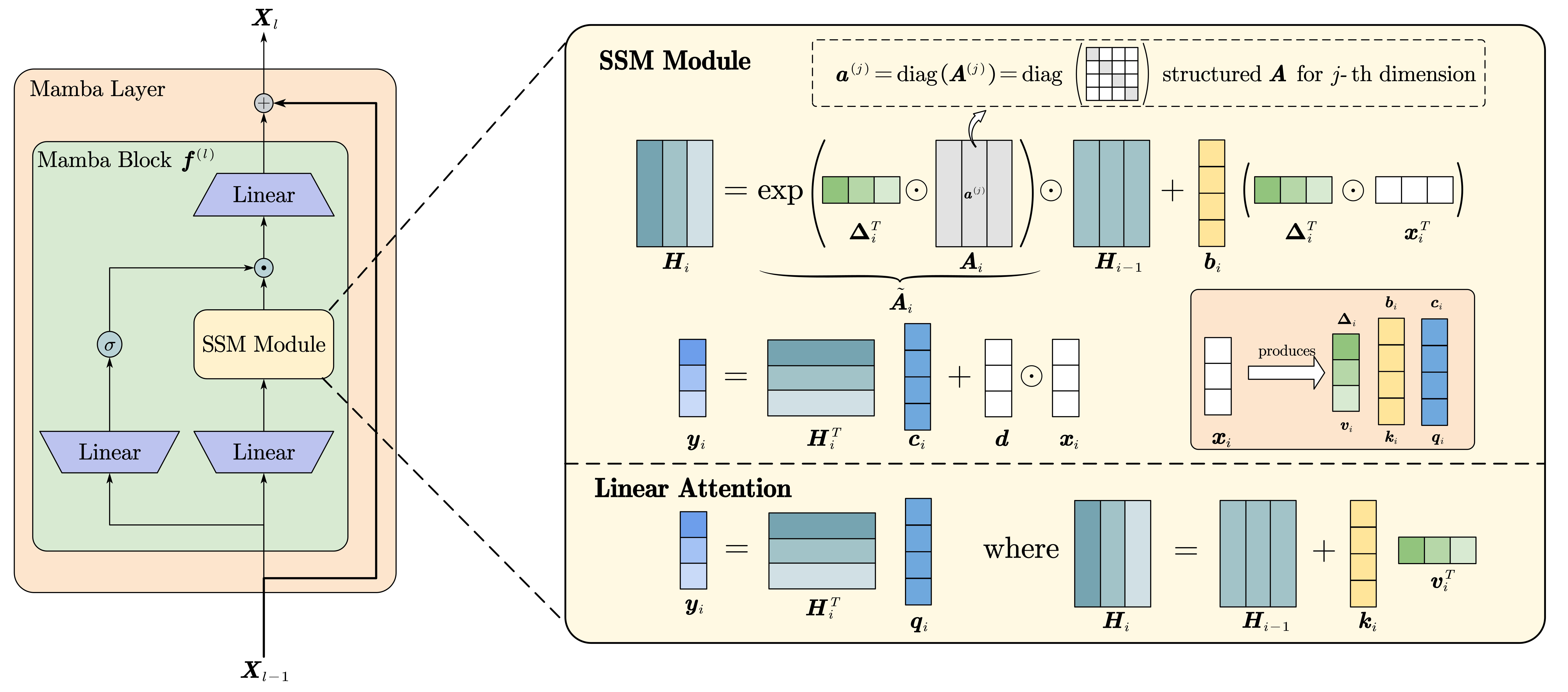}
	\caption{The illustration of the simplified Mamba layer we focus on. \textbf{Left Part}: A Mamba layer can be composed of a Mamba block with the residual connection; The Mamba block uses a gated MLP to control the output of the SSM module, where we call the branch with the SSM module as ``the SSM branch" while the other as ``the gated branch"; \textbf{Right Part:} The SSM module used in Mamba can be rewritten in a form similar to linear attention, where  $\vDelta_i$, $\vb_i$, and $\vc_i$ in SSM are all derived from the current $\vx_i$, similar to $\vv_i$, $\vk_i$, and $\vq_i$ in linear attention respectively.}
	\label{fig:mamba}
	\vspace*{-0.3cm}
\end{figure*}

\section{Preliminaries}\label{sec:mamba}
In this section, we introduce the Mamba structure that we focus on and its reformulated form firstly introduced by \citet{Mamba_thu_linear_attention}, which facilitates a better understanding of the connection between Mamba and linear attention.
% as illustrated in Section \ref{sec:attention}.
%It should be noted that to better distinguish different types of variables, in this paper, we use bold uppercase letters to represent matrices such as $\mA$, bold lowercase letters to represent vectors such as $\va$, and all non-bold letters to represent scalars such as $a$ and $\Delta$. 
%This may differ slightly from the notations used in some Mamba-related literatures \citep{Mamba2, Mamba_vision}, where uppercase letters are used to describe $\mA$, $\mB$, $\mC$ in SSMs. 

\textbf{State Space Model:} The state space model (SSM) is inspired by the continuous system that maps a scalar input $x(t) \in \sR$ to its output $y(t) \in \sR$ through a high-dimensional hidden state $\vh\in \sR^{d_h}$ \citep{Mamba, Mamba_thu_linear_attention, Mamba_vision}. 
Specifically, this system can be written as:
\begin{align*}
	\vh'(t) &= \mA \vh(t) + \vb x(t), ~~y(t) = \vc^T\vh(t) + d  x(t),
\end{align*}
where $\mA \in \sR^{d_h \times d_h}$ denotes the evolution parameters, $\vb, \vc \in \sR^{d_h}$ are projection parameters and $d$ is a scalar parameter.
This continuous system can be discretized using zero-order hold (ZOH), resulting in a discrete version that can be used for neural networks.
In this process, $\mA, \vb$ will be transformed as $\overline{\mA}$, $\overline{\vb}$.
The discrete version can be written as:
\begin{align*}
	\vh_{i} &= \bA \vh_{i-1} + \bb x_{i}, ~~~~y_{i} = \vc^T\vh_{i} + d x_{i},
\end{align*}
where $\bA = \mathrm{exp}(\Delta \mA)$, $\bb = (\Delta \mA)^{-1}(\mathrm{exp}(\Delta \mA)-\mI)\cdot \Delta \vb \approx \Delta \vb$ and $\Delta \in \sR$ is a timescale parameter.
%Equivalently, its output can be formalized in a convolutional form, that is, 
%\begin{align*}
%	y_i = \vx * \overline{\mK}, \quad \overline{\mK} = (\vc^T\bb, \vc^T\bA\bb,...,\vc^T\bA^{N-1}\bb),
%\end{align*}
%where $\vx = [x_1, x_2, ... , x_N]$ is the input sequence and $\mK$ is the convolution kernel.
The matrix $\mA$ is often assumed to be structured, e.g., diagonal, resulting in structured SSMs \citep{S4, S4_2}. 
%The convolutional form is often adopted during training while it will be switched to a recurrent form  during inference.

\textbf{Selective State Space Module:} To enhance the SSM, Mamba makes $\vb_i, \vc_i, \Delta_i$ dependent on different inputs $x_i$.
%More specifically, $\mA$ is set te be diagonal resulting in that $\bA_{i} \vh_{i-1} = \pa_{i} \odot \vh_{i-1}$ where $\pa_i = \mathrm{diag}(\bA_i)$, $\bA _i = \mathrm{exp}(\Delta_i \mA)$and $\odot$ denotes the element-wise product.
Specifically, $\mA$ is set to be diagonal resulting in that $\bA_{i} \vh_{i-1} = \pa_{i} \odot \vh_{i-1}$ where $\pa_i = \mathrm{exp}(\Delta_i \va)$, $\va = \mathrm{diag}(\mA)$ and $\odot$ denotes the element-wise product.
In addition, $\bb_i x_i = \Delta_i \vb_i x_i = \vb_i (\Delta_i \odot x_i)$.
Thus, this transformation ultimately results in:
\begin{align*}
	\vh_{i} &= \pa_i \odot \vh_{i-1} + \vb_i (\Delta_i \odot x_i), \\
	y_{i} &= \vc_i^T\vh_{i} + d \odot x_{i}.
\end{align*}
Furthermore, to extend the case of processing scalar inputs $x_i$ to vectors $\vx_i \in \sR^{d}$, Mamba performs the above operations on each dimension independently, which can be formalized as:
\begin{equation}\label{yssm}
	\begin{aligned}
		\mH_{i} &= \pA_i \odot \mH_{i-1} + \vb_i (\vDelta_i \odot \vx_i)^T, \\
		\vy_{i} &= \mH_{i}^T\vc_i + \vd \odot \vx_{i},
	\end{aligned}
\end{equation}
where we have $\pA_i = [\pa_{i}^{(j)}]_{j=1}^d \in \sR^{d_h\times d}$ , $\vb_i = \mW_\vb \vx_{i} \in \sR^{d_h}$, $\vc_i = \mW_\vc \vx_i \in \sR^{d_h}$ and $\vDelta_i = \mathrm{Softplus}(\mW_{\vDelta}^2\mW_{\vDelta}^1\vx_i) \in \sR^{d}$.
%$\mH_{i}, \mW_\vb, \mW_\vc \in \sR^{d_h \times d}, \mW_{\vDelta, 1} \in \sR^{d_\Delta \times d}, \mW_{\vDelta, 2} \in \sR^{d \times d_\Delta}$.
Thus, given the input $\mX = [\vx_i]_{i=1}^{N} \in \sR^{d \times N}$, we denote the output of the SSM module in Mamba as $\mY = \mathrm{SSM}(\mX)$
where $\mY = [\vy_i]_{i=1}^N$ and $\vy_i$ follows Eq~(\ref{yssm}).
This formalization was introduced by \citet{Mamba_thu_linear_attention} to build a bridge between Mamba and linear attention and here we follow this form.

\noindent \textbf{Mamba Layer:}
Given an input sequence $\mX = [\vx_{i}]_{i=1}^{N} \in \sR^{d \times N}$,  it will be processed by stacked Mamba layers, each comprising a residual connection and a Mamba block $\vf^{(l)}:\sR^{d} \rightarrow \sR^{d}$. 
The output of the $l$-th layer can be formulated as
%\textbf{Selection 1:}(Consider causal 1D)
%\begin{equation}
%	\mX_{l} = \mX_{l-1} + \vf^{(l)}(\mX_{l-1}), \quad   l = 1,2,\dots,L
%\end{equation}
%\begin{equation}
%	\vf^{(l)}(\mX_{l-1}) = \mW_3^{(l)} \cdot \mathrm{SSM}(\sigma(\conv(\mW_1^{(l)} \mX_{l-1}))) \odot \sigma(\mW_2^{(l)} \mX_{l-1})
%\end{equation}
%where $\mW_1,\mW_2 \in \sR^{d \times d_{in}}$, $\mW_3 \in \sR^{d_{in} \times d}$, $\sigma(\cdot)$ denotes $\silu$ activation function and $\conv(\cdot)$ is the 1D convolution operator.
%For simplification in our analysis, we ignore the layer normalization and focus on the core component.
%
%\textbf{Selection 2:}(Ignore causal 1D)
%\begin{equation}\label{mlayer}
%	\mX^{l+1} = \mX^{l} + \vf^{(l)}(\mX^{l}),
%\end{equation}
%\begin{equation}\label{mblock}
%	\vf^{(l)}(\mX^{l}) = \mW_3^{l} \cdot 
%	\mathrm{SSM}(\mZ_1^l) \odot \sigma(\mZ_2^l),
%\end{equation}
\begin{align}
	\mX^{l+1} &= \mX^{l} + \vf^{(l)}(\mX^{l}),   \label{mlayer} \\
	\vf^{(l)}(\mX^{l}) &= \mW_3^{l} \cdot \mathrm{SSM}(\mZ_1^l) \odot \sigma(\mZ_2^l), \label{mblock}
\end{align}
where $\mZ_1^l = \mW_1^l \mX^l + \vb_1^l$, $\mZ_2^l = \mW_2^l \mX^l + \vb_2^l$ and $\sigma(\cdot)$ denotes $\silu$ activation function.
A Mamba block combines the output of the SSM module\footnote{To avoid confusion, we clarify that the SSM module here is specifically the Selective State Space Module used in Mamba, which is followed throughout the rest of the paper.}
Here we call the branch with the SSM module as "the SSM branch" and the other as "the gated branch". 
To facilitate a clearer analysis, we restrict our attention to the core components of Mamba, namely the SSM module and the gated MLP, and leave out other architectural details, as illustrated in Figure~\ref{fig:mamba}.
% including the $\sigma(\cdot)$ before SSM module, Layer Normalization and 1D Convolutions.

\section{Can Mamba always Perform COPY Perfectly?}\label{sec:copy}
In this section, we begin by interpreting the reformulated SSM module as in Section~\ref{sec:mamba} as a special case of linear attention, and then explore Mamba's ability to perform COPY operations during inference, which is crucial to retrieving contextual information during model's reasoning.

\subsection{Viewing Mamba as linear attention}\label{sec:attention}
The attention mechanism is the key to the success of Transformers. 
Recent works has explored the relationship between Mamba and attention mechanisms particularly the linear attention \cite{Mamba_thu_linear_attention,Mamba2,attention_ssm_rnn}.
The linear causal attention can be formalized as:
\begin{equation}\label{eq:linear_atten}
	\vy_{i} = \sum_{j=1}^{i} \vv_j \vk_j^T \vq_i = \sum_{j=1}^{i} (\vq_i^T \vk_j) \vv_{j} = \sum_{j=1}^{i} a_{ij} \vv_j,
\end{equation}
where $\vq_i$, $\vk_i$, $\vv_i$ are usually interpreted as query, key, value respectively and $a_{ij}$ denotes the attention scores of the $i$-th token to the $j$-th one.
In attention mechanisms in Transformers, there exists $a_{ij}>0$ for all $j \le i$ and $\sum_{i=1}^{j} a_{ij} = 1$, which can be implemented by Softmax function.
%, or approximated by kernel methods \cite{elu+1,performer}.

On the other hand, given the input sequence $[\vx_i]_{i=1}^N$, the output of the SSM module formulated as Eq~(\ref{yssm}) will have the following form when we set $\mH_0$ and $\vd$ to be zeros:
\begin{equation}\label{eq:yssm}
	\vy_i =  (\vDelta_i \odot \vx_i) \vb_i^T \vc_i + \sum_{j=1}^{i-1} \left[ \mathbf{\Pi}_j \odot (\vDelta_j \odot \vx_j) \vb_j^T \right] \vc_i,
\end{equation}
where $\mathbf{\Pi}_j = \pA_{i} \odot \pA_{i-1} \odot \dots \odot \pA_{j+1}$.
We notice that since in practice all elements of $\vDelta$ are positive and $\mA$ is set to be negative \cite{Mamba,Mamba2,Mamba_thu_linear_attention}, the elements of $\pA_i$ in Eq~(\ref{yssm}) will belong to the interval $[0,1]$ as $\pa = {\rm diag}({\rm \exp}(\Delta\mA))$.
To simplify our analysis, we replace the matrix $\pA_i$ with a forgetting coefficient $a_i\in [0,1]$ (i.e., considering the case where all elements of $\pA_i$ are the same as $a_i$ \cite{Mamba2}).
%In fact, the subsequent analysis can be easily extended to the normal case where the elements of matrix $\pA_i$ are different but belong to interval $[0,1]$.
%Then, Eq~(\ref{eq:yssm}) can be rewritten as 
In fact, the subsequent analysis can be easily extended to the non-simplified case.
Then, Eq~(\ref{eq:yssm}) can be rewritten as  
\begin{equation}\label{eq:atten}
%\begin{aligned}
%	\vy_i & = \sum_{j=1}^{i}\alpha_j(\vDelta_j \odot \vx_j) \vb_j^T \vc_i \\
%	& = \sum_{j=1}^{i}\alpha_j (\vc_i^T\vb_j) (\vDelta_j \odot \vx_j) ,
%\end{aligned}
%	\vy_i = \sum_{j=1}^{i}\alpha_j(\vDelta_j \odot \vx_j) \vb_j^T \vc_i = \sum_{j=1}^{i}\alpha_j (\vc_i^T\vb_j) (\vDelta_j \odot \vx_j) ,
	\vy_i = \sum_{j=1}^{i}\alpha_j (\vc_i^T\vb_j) (\vDelta_j \odot \vx_j) ,
\end{equation}
where $\alpha_j = \Pi_{k=j+1}^{i}a_{k}$ for $j \le i-1$ and $\alpha_i = 1$.
We call $\alpha_j$ as the cumulative forgetting coefficient.  
In this form, we can observe that it bears similarities to linear attention without normalization in Eq~({\ref{eq:linear_atten}), where $(\vDelta_j \odot \vx_j)$, $\vb_j$, $\vc_i$ corresponds to $\vv_j$, $\vk_j$ and $\vq_i$ respectively and $\vc_i^T\vb_j$ acts like attention scores $a_{ij}$.
Considering $\alpha_{j-1} \le \alpha_j$ and $\alpha_j \in [0,1]$ for all $j \le i$, the main difference is that each term in Eq~(\ref{eq:atten}) is weighted by a coefficient $\alpha_j$ to achieve the forgetting of inputs at longer distances while the attention mechanism in Transformers uses the constraints for attention scores imposed by Softmax to make sure the scaling of outputs.

\begin{figure}[t]
	\centering
	\includegraphics[scale=0.43]{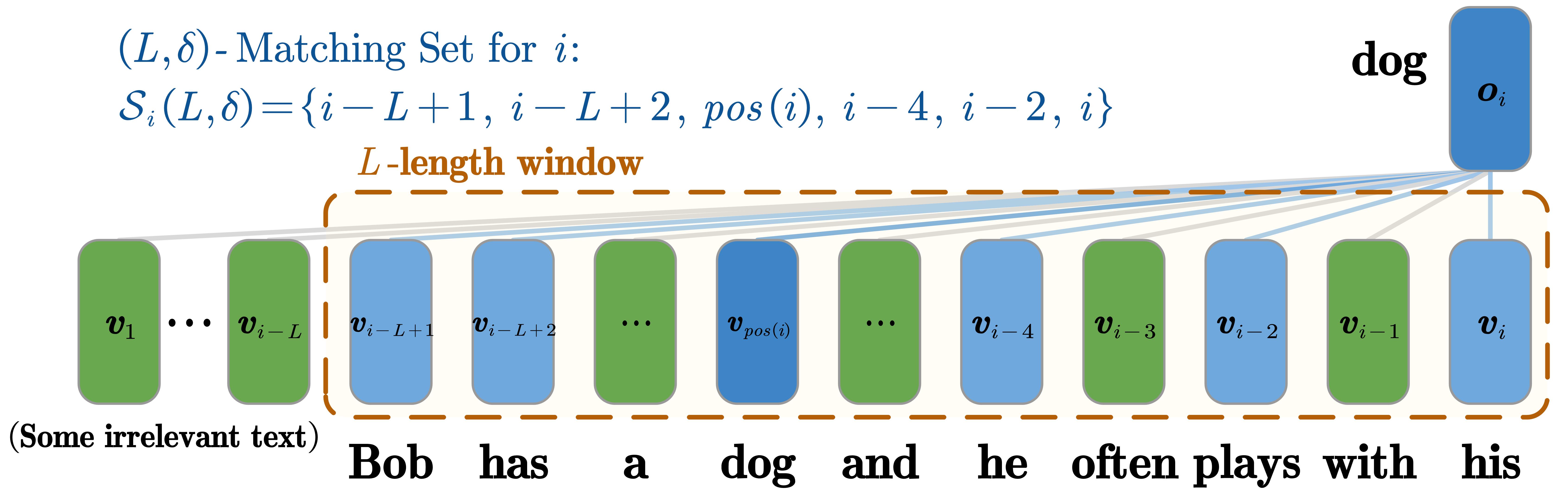}
	\caption{An example for COPY operation and  ($L,\delta$)-matching set. We expect the output at position $i$ to be the historical record (value) corresponding to ``dog".
		The historical records belonging to the ($L,\delta$)-matching set are labeled in blue, which are more relevant to the output $\vo_i$ based on the attention scores $|\vc_i^T\vb_j| \ge \delta$.}
	\label{fig:copy}
	\vspace*{-0.3cm}
\end{figure}

\subsection{Limitation of Mamba to Perform COPY}\label{sec:trade-off}
Based on the observation of the connection between the SSM module and attention mechanism, we investigate the capability of Mamba to recover historical inputs, which is foundational for the model to process information based on context. We define the COPY operation as follows:
\begin{definition}[COPY Operation]
	For a given SSM module and input sequence $\vx_1, \vx_2, \dots, \vx_{N}$, we denote $\vv_i = \vDelta_i \odot \vx_i$ as historical records.
	Then the output of COPY operation is a sequence of vectors $\vo_1, \vo_2, ..., \vo_N$ with $\vo_i = \vv_{pos(i)}$ where $pos(i)$ is the position we want to copy.
\end{definition}
As stated earlier, for a given SSM module, $(\vDelta_j \odot \vx_j)$, $\vb_j$, $\vc_i$ corresponds to value, key and query respectively. 
Thus from the perspective of linear attention, the COPY operation for position $i$ aims to retrieve the interested value located at $pos(i)$.
Intuitively, there exists some historical records $\vv_j$ that are more relevant to the current query $\vc_i$ than others, which can be described as:
\begin{definition}[($L,\delta$)-Matching Set] 
	For a given SSM module and input sequence $\vx_1, \vx_2, \dots, \vx_{N}$, the ($L,\delta$)-matching set for $\vx_i$ is defined as $\gS_i(L,\delta) =  \{ j \mid |\vc_i^T \vb_j| \ge \delta,  i-L < j\le i \}$.
\end{definition}
The ($L,\delta$)-local matching set describes the positions of historical keys $\vb_j$ that is highly relevant to the current query $\vc_i$ within a local window of length $L$, that is, the "attention scores" $|\vc_i^T\vb_j|$ is lower-bounded by $\delta$.
An illustration for COPY operation and  ($L,\delta$)-matching set is shown in Figure~\ref{fig:copy}.
We then make the following assumption:
\begin{assumption}\label{assum:copy}
	For a given SSM module and some input sequence $\vx_1, \vx_2, \dots, \vx_{N}$, the following conditions holds: 
	\begin{itemize}
		\item  There is some $L\in \sN$ and $\delta \in \sR$ such that for any $i \in [N]$, $S_i(L,\delta)$ exists and $pos(i) \in \gS_i(L,\delta)$. In addition, for any $j \notin \gS_i(L,\delta)$, $|\vc_i^T \vb_j| < \delta$.	
		\item For any $i \in [N]$,  $\sum_{j \in \gS_i(L,\delta)}\alpha_j|\vc_i^T \vb_j| < 1$.
	\end{itemize}
\end{assumption}
The first condition requires that the positions to be copied lies in $\gS_i(L,\delta)$, which is intuitive since $\gS_i(L,\delta)$ captures past positions most relevant to the output $\vo_i$, and $pos(i)$ should be included in this set as its highly relevance.
The second condition in Assumption~\ref{assum:copy} imposes a constraint on the scaling of the output caused by the records in $\gS_i(L,\delta)$, that is, $\|  \sum_{j \in \gS_i(L,\delta)}\alpha_j\vc_i^T \vb_j \vv_j \| < 1$ when $\|\vv_j \| \le 1$.
Next, we explore the condition that can enable a given SSM module to approximate the COPY operation.
Below, we provide our result:
\begin{theorem}[Approximate COPY operation with constant-size SSM module]\label{constant_copy}
	Given a SSM module with constant size and the input sequence $\vx_1, \vx_2, \dots, \vx_{N} \in [-M, M]^{d}$ such that Assumption \ref{assum:copy} holds, then for any $\epsilon > 0$, the SSM module can approximate COPY operation at some position $i$, that is, $\| \vy_i - \vo_i \|_{\infty} \le \epsilon$  if there is $\vc_{i}^T\vb_{pos(i)} \ge c ( \frac{1}{a_{\min}})^{L-1} + d $ where $a_{\min} = \min_{pos(i) < j \le i} a_j$ and $c, d$ are constants related to $\epsilon$, $\delta$.
\end{theorem}
The proof can be seen in Appendix~{\ref{app:constant_copy}}. 
Theorem \ref{constant_copy} shows that, relative to the sequence length $N$, a constant-size SSM module can approximate COPY within error $\epsilon$ if the attention score $\vc_{i}^T\vb_{pos(i)}$ is lower bounded by $c ( \frac{1}{a_{\min}})^{L-1} + d$.
This means that larger forgetting coefficients after $pos(i)$ help retain $\vv_{pos(i)}$ to some extent, making it easier for the attention score to meet the lower bound and thereby enabling the COPY operation.
Nevertheless, we note that since $a_{\min} < 1$,  achieving the COPY operation requires the attention score to grow exponentially with $L$, where $L$ is the distance between the current position $i$ and the farthest highly relevant historical record.
This renders the condition difficult to satisfy.
However, we will show that under similar assumptions, a constant-size attention module in Transformers can perform COPY under less restrictive conditions, which can be described as follows:
\begin{theorem}[Approximate COPY operation with constant-size attention module]\label{constant_copy_attention}
	Given a attention module with constant size and input sequence $\vx_1, \vx_2, \dots, \vx_{N} \in [-M, M]^{d}$ such that Assumption \ref{assum:attention copy} holds, then for any $\epsilon > 0$, the attention module can approximate the COPY operation at some position $i$, that is, $\| \vy_i - \vo_i \|_{\infty} \le \epsilon$  if there is $\vq_i^T \vk_j \ge \log\tilde{L} + c$ where $\tilde{L} = \max\{L, i - |\gS_{i}(L,\delta)|\}$ and $c$ is a constant related to $\epsilon$, $\delta$.
\end{theorem}
More details can be seen in Appendix~\ref{app:attention_copy}.
Theorem \ref{constant_copy_attention} shows that a constant-size attention module can achieve COPY if $\vq_i^T \vk_{pos(i)}$ (corresponding to $\vc_i^T \vb_{pos(i)}$ above) is lower bounded by $\log L$, which is much easier to satisfy than the condition for the SSM module.
An intuitive explanation is that despite having a constant parameter size, the attention module still maintains $O(N)$ cost when inference, allowing it sufficient capacity to store historical records and retrieve the desired one.
In contrast, the SSM module with a constant size (thus $O(1)$ inference cost) can easily have the interested records overwhelmed by related but irrelevant information, making COPY more difficult to achieve.
This naturally leads to the question: can we make COPY easier by increasing the size of the SSM module? In the following, we show that when its size scales linearly with the sequence length $N$, it is always possible for the SSM module to perform COPY.
\begin{theorem}[Perform COPY operation with linear-scaling size]\label{linear_copy}
	Given the input sequence $\vx_1, \vx_2, \dots, \vx_{N} \in [-M, M]^{d}$, there exists a SSM module with size $O(N)$ that can perform the COPY operation, that is, $\vy_i = \vo_i$ for any $i \in [N]$.
\end{theorem}
The proof can be found in Appendix~\ref{app:linear_copy}.
Theorem~\ref{linear_copy} is based on a simple intuition: when the model size grows linearly with the length of input sequence, the model will have enough space to store these historical records and therefore can retrieve them.
However, it should be noted that such a linear-size SSM module will have the same cost as attention module in Transformers when inference, that is, $O(N)$ at each step. 
Thus from this perspective, Mamba does not offer savings when facing the COPY operation.
Based on this observation, we will elaborate in Section~\ref{sec:cot} that when faced with CoT reasoning tasks modeled by dynamic programming problems, Mamba incurs the same order of overhead as Transformers.

%Theorem~\ref{linear_copy} states that to achieve the COPY operation, the size of the Mamba block needs to be at most $O(N)$.
%Theorem~\ref{linear_copy} provides an upper bound on the size required for Mamba block to achieve the COPY operation, which is $O(N)$.
%All parameters being upper bounded by $O(poly(M,N))$ means that the problem can also be solved by the same Mamba block with $log(N)$ precision, which has been also adopted in previous works \cite{merrill_log_precision,hedi_CoT,hedi_linear_attention,Lvkaifeng_rnn_not_tf}.
%It should be noted that during inference, such a Mamba block will have the same cost as Transformers at each step, as both time and memory overhead grow linearly with the length.
%Thus from an upper-bound perspective, Mamba does not offer any savings when facing the COPY operation.
%Based on this observation, we will elaborate in Section~\ref{sec:cot} that when faced with the DP problems, Mamba will also incur the same order of overhead as the Transformer.

\section{Mamba equipped with CoT to Solve DP}\label{sec:cot}

Chain-of-Thought (CoT) is regarded as a powerful approach to enhancing a model's reasoning ability \citep{CoT}.
It allows the model to solve complex reasoning problems by decomposing them into a sequence of simpler subproblems. 
During inference, the model needs to retrieve useful contexts from the reasoning chain, which is closely related to its ability to perform the COPY operation, and incrementally use these contexts to produce the final output. 
Such a reasoning process can be modeled as a dynamic programming (DP) problem \citep{hedi_CoT, hedi_linear_attention}, characterized by an input sequence, a state space, a transition function, and an aggregation function, each of which is described below.

\noindent \textbf{$\bullet$~Input sequences:} We use $\{s^{(1)}, s^{(2)}, \dots, s^{(N)}\}$ to denote the input sequences and the vector $\vn = \left[|s^{(1)}|, |s^{(2)}|, \dots, |s^{(N)}|\right]^T$ to describe the size of the problem, where $|s^{(i)}|$ denotes the length of the $i$-th sequence.

\noindent \textbf{$\bullet$~State Space:} A DP problem can be decomposed into a series of sub-states to solve, forming a state space $\gI_{\vn}$ whose size depends on the problem size $\vn$.
Each state \( i \in \gI_{\vn}\) represents an intermediate value ${\rm dp}(i)$ to compute, and \( i \prec j \) means that state \( i \) needs to be solved before state \( j \).
The function $f_{\gI}: \gI_{\vn} \rightarrow \gI_{\vn}$ defines the next state if $j$ is the next state to solve after state $i$.

\noindent \textbf{$\bullet$~Transition function:} The intermediate DP values can be calculated by a transition function $f_\gT$ as $\mathrm{dp}(i) = f_{\gT}(\vn, \vs, \{ (j, \mathrm{dp}(j)) : j \prec i  \})$ where $\vs$ denotes the concatenation of all tokens from the input sequences\footnote{We use $s^{(i)}$ to denote the $i$-th input sequence while $\vs_i$ to denote the $i$-th input token, where $\vs$ is all input tokens from the concatenated input sequences.}.
This can be rewritten as
$\mathrm{dp}(i) = f_{\gT}\left( \vn, ~\{ \vs_j: j \in \gI_{i} \},~ \{ \mathrm{dp}(k) : k \in \gV_{{\rm dp}(i)}  \}\right)$ where $\gI_i $ and $\gV_{{\rm dp}(i)}$ are the sets of input tokens indices and DP values to solve state $i$ respectively.

\noindent \textbf{$\bullet$~Aggregation function:} To produce the final answer, the aggregation function collects the required intermediate DP values and calculate the final result as $A = f_{\gA}(\{{\rm dp}(i): i \in \gA_\vn\})$ where $A$ denotes answer and $\gA_\vn$ is the set of DP values needed in the aggregation according to the problem size $\vn$.

We consider how Mamba layers, as defined in Eq.~(\ref{mlayer}), incrementally generate solutions to DP problems using CoT. The generated sequence follows the format:
\begin{equation*}
	\begin{aligned}
		&s^{(1)} ~|~ s^{(2)} ~|~ \dots ~|~ s^{(N)} ~|~ \left(i_1, \mathrm{dp}(i_1) \right)~ \left(i_2, \mathrm{dp}(i_2) \right) \\ 
		&\left(i_3, \mathrm{dp}(i_3)\right) \dots	 ~\left(i_{|\gI_{\vn}|}, \mathrm{dp}(i_{|\gI_{\vn}|})\right) |~A
	\end{aligned}
\end{equation*}
where the input sequence is separated using the symbol $|$ as a delimiter.
An classic example of DP problems is the Longest Increasing Subsequence (LIS) problem, whose goal is to find the length of the longest increasing subsequence of a given integer sequence.
Following the above form, an example of the CoT output sequence can be 
\[
\underbrace{13\ 32\ 12\ 39\ 84}_{\text{input sequence } s^{(1)}}
\quad | \quad
\underbrace{1\ 2\ 1\ 3\ 4}_{\text{DP values}}
\quad | \quad
\underbrace{4}_{\text{final answer $A$}},
\]
where there is only one input sequence $s^{(1)}$ for this problem.
More examples for DP problems can be seen in Appendix~\ref{app:dpexample}.
The reasoning process of LLMs in real scenarios can generally be modeled in the form described above.
Based on this formulation, we present the following result showcasing Mamba’s ability to solve DP problems when equipped with CoT:
\begin{theorem}[Solve DP problems with CoT]\label{DP_Mamba}
	Considering any DP problem and given input sequences that satisfies Assumption~\ref{assum:dp}, for any integer $T \in \sN$, there exists several Mamba layers with size $O(T)$, such that the answer generated by the Mamba layers will be correct when the length of the answer is no more than $T$.
\end{theorem}
More details can be seen in Appendix~\ref{app:cot}. 
The intuition behind Theorem \ref{DP_Mamba} is that when the size of the Mamba layers scales linearly with $T$, the model gains sufficient capacity to retrieve useful intermediate states from the reasoning chain for the next inference step, similar to the behavior described in Theorem~\ref{linear_copy}.
In this case, each CoT step in Mamba incurs an $O(T)$ cost, resulting in a total inference cost of $O(T^2)$.
Similarly, as shown by \citet{hedi_CoT}, a constant-sized Transformer can also solve any DP problem with CoT, but due to the per-step attention cost scaling linearly with $T$, the total inference cost remains $O(T^2)$.
While for efficient Transformers, \citet{hedi_linear_attention} reached similar conclusions.
Thus, from this prespective, Mamba also does not offer additional savings:
Due to the limitation of constant inference capacity, Mamba may need to increase its model size to achieve performance comparable to that of a constant-sized Transformer, which in turn leads to higher inference cost.

It seems disappointing that, like efficient Transformers, Mamba does not reduce overhead for general DP problems.
However, we argue that when DP problems exhibit favorable local properties\citep{hedi_linear_attention}, Mamba has the potential to achieve efficiency.
Assuming that the output of CoT can be written as $\vo_1, \vo_2, \dots, \vo_T$, if $\vo_i = f(\{\vo_j:  i - m \le j < i \})$ for any $i \in [T]$, that is, $\vo_i$ only depends on at most $m$ preceding intermediate results, then we call the DP problem is $m$-locality DP problem.
As illustrated in Appendix~\ref{app:m-locality}, the common arithmetic task can be viewed as such $m$-locality DP problem.
Then we present the following result:
\begin{theorem}[Solve $m$-locality DP problems with CoT]\label{localDP_Mamba}
	Consider any $m$-locality DP problem and given input sequences that satisfies Assumption~\ref{assum:dp}, for any integer $T \in \sN$, there exists several Mamba layers with size $O(m)$, such that the answer generated by the Mamba layers will be correct when the length of the answer is no more than $T$.
\end{theorem}
The proof can be seen in Appendix~\ref{app:cot_local}. 
Theorem~\ref{localDP_Mamba} shows that when handling $m$-locality DP problem with CoT, the needed size of Mamba depends on the problem's locality. The cost for each step becomes a constant $O(m)$ and the total cost becomes $O(mT)$ rather than $O(T^2)$ leading to savings in cost when $T$ is much larger than $m$.

\begin{figure}[t]
	\begin{subfigure}[t]{0.19\linewidth}
		\centering
		\includegraphics[scale=.19]{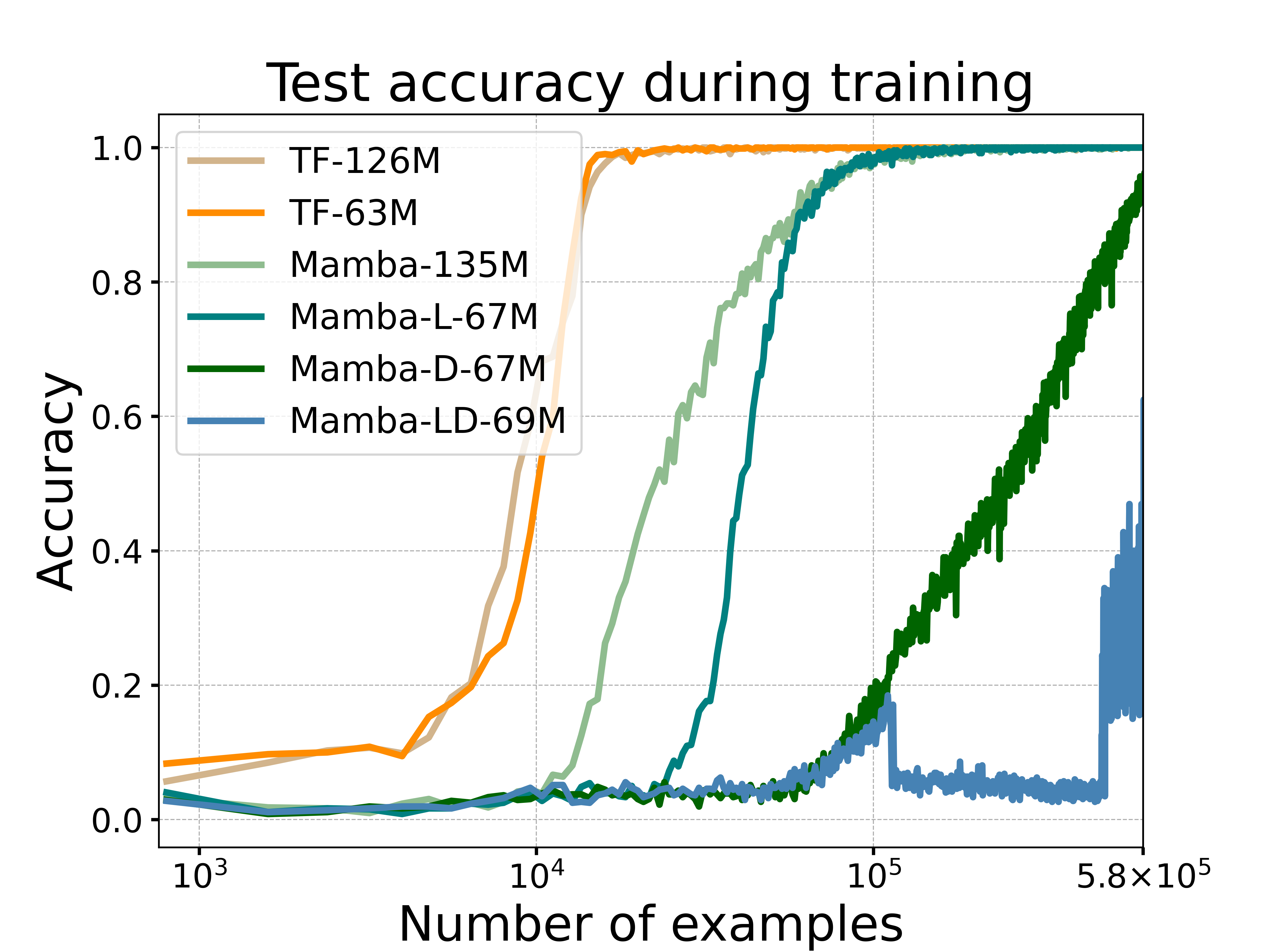}
		%		\subcaption{ }
		%		\label{fig:16-norm}
	\end{subfigure}
	\hspace{2.3cm}
	\begin{subfigure}[t]{0.19\linewidth}
		\centering
		\includegraphics[scale=.19]{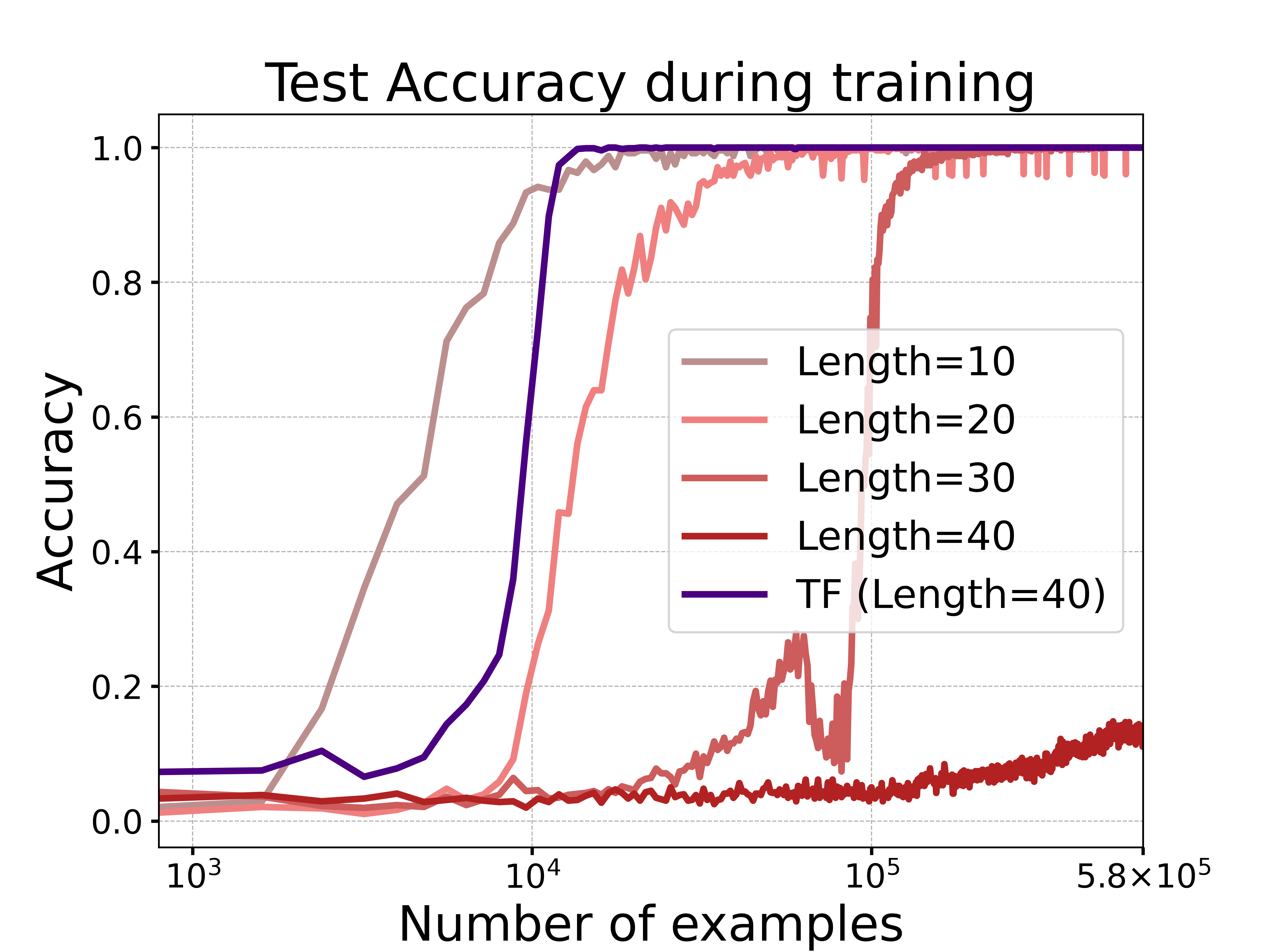}
		%		\subcaption{ }
		%		\label{fig:a}
	\end{subfigure}
	\vspace{-0.cm}
	\caption{\textbf{Left:} Accuracy during training of models with different sizes on the copy task. 
	\textbf{Right:} The performance of Mamba when the length of the input sequence to be copied is changed.}
	\vspace*{-0.3cm}
	\label{fig:ex}
\end{figure}

\section{Experimental Results}\label{sec:ex}

In this section, we conduct experiments to further illustrate our findings.

\textbf{Experiments on copy tasks:}
We evaluate models on the copy task introduced by \citet{phonebook}, where the goal is to repeat an input string exactly. During training, input lengths are uniformly sampled from $[N_{\min}, N_{\max}]$, with characters drawn randomly from the alphabet. At test time, models copy strings of fixed length $N_{\max}$, and accuracy is measured by the proportion of correctly copied characters. More details are in Appendix~\ref{app:ex}, and results are shown in Figure~\ref{fig:ex}.

We first compare models of different sizes on fixed-length strings (left of Figure~\ref{fig:ex}, where Transformer is denoted as TF). 
We find that both TF-126M and Mamba-135M eventually learn the copy task, but TF converges much faster. 
For smaller models, TF-63M is hardly affected even with half the layers.  
In contrast, smaller Mamba variants often struggle: (i) \textbf{reducing layers} (Mamba-L-67M) slows learning; (ii)  \textbf{reducing the hidden size} (Mamba-D-67M) even slower; and (iii) \textbf{reducing the hidden size while increasing layers} (Mamba-LD-69M) finally fails to learn the task within finite examples and the training becomes unstable, which indicates that the hidden size has a greater impact on Mamba's performance.
This confirm our findings in Section \ref{sec:copy} that Mamba indeed finds it harder to learn the copy task compared to Transformers.

Furthermore, we change the maximum length $N_{\max}$ while maintaining the model size, as shown in the right of Figure \ref{fig:ex}. 
As the task becomes more challenging (increasing the length $N_{\max}$), Mamba requires more training examples to successfully learn the task and fails to learn within finite examples when $N_{\max} = 40$.
In contrast, Transformer can still learn quickly and maintain stability even at $N_{\rm max} = 40$, which again indicate that Transformer outperforms Mamba in executing copy operations as in Section \ref{sec:copy}.

\begin{figure}[t]
	\begin{subfigure}[t]{0.19\linewidth}
		\centering
		\includegraphics[scale=.19]{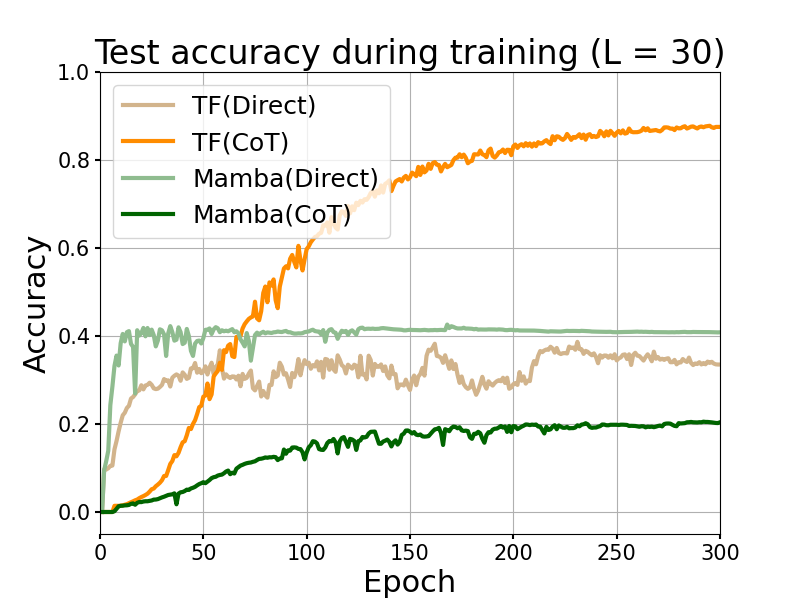}
		%		\subcaption{ }
		%		\label{fig:16-norm}
	\end{subfigure}
	\hspace{2.3cm}
	\begin{subfigure}[t]{0.19\linewidth}
		\centering
		\includegraphics[scale=.19]{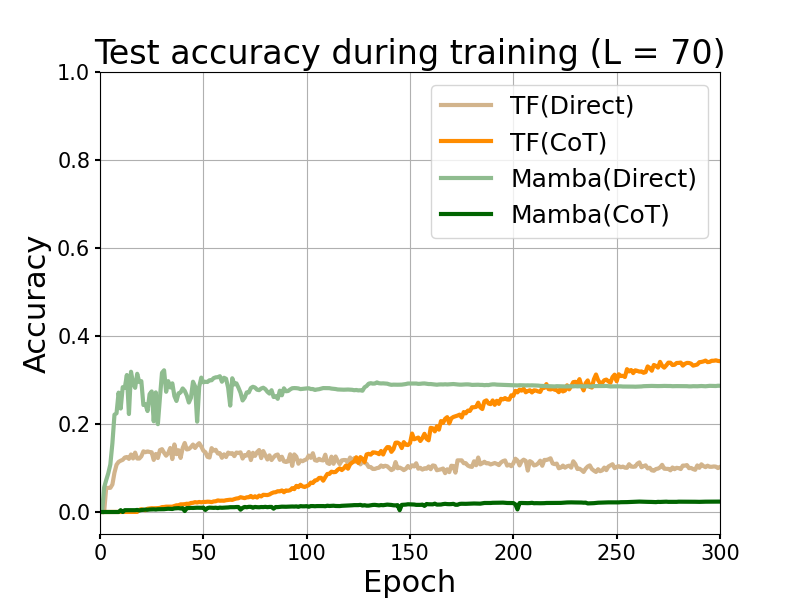}
		%		\subcaption{ }
		%		\label{fig:a}
	\end{subfigure}
	\vspace{-0.cm}
	\caption{Accuracy during training when the task length $L = 30/70$ and $d = 256$ (TF denotes Transformer).}
	\vspace*{-0.3cm}
	\label{fig:1}
\end{figure}

\begin{figure}[t]
	\begin{subfigure}[t]{0.19\linewidth}
		\centering
		\includegraphics[scale=.19]{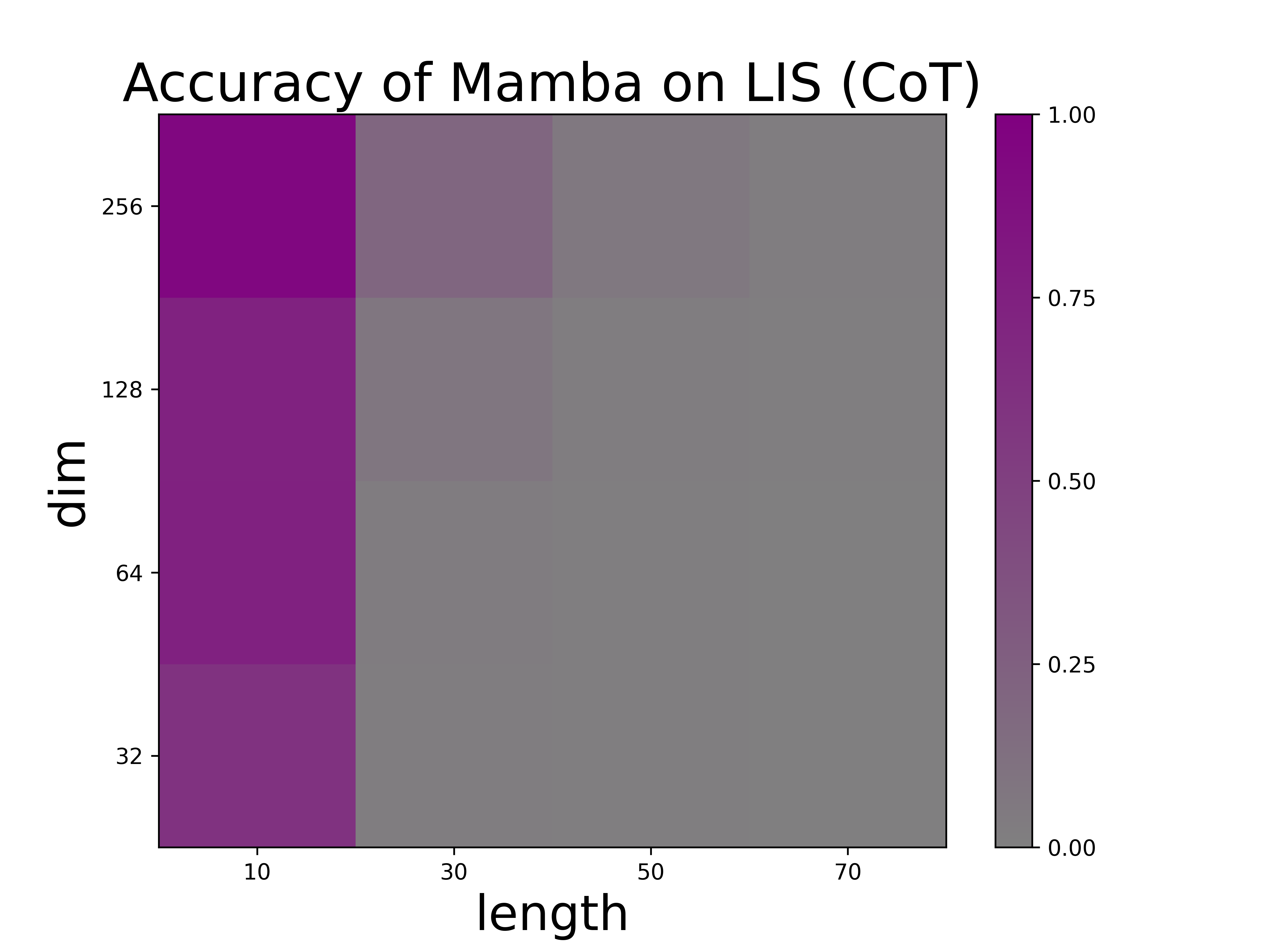}
		%		\subcaption{ }
		%		\label{fig:16-norm}
	\end{subfigure}
	\hspace{2.3cm}
	\begin{subfigure}[t]{0.19\linewidth}
		\centering
		\includegraphics[scale=.19]{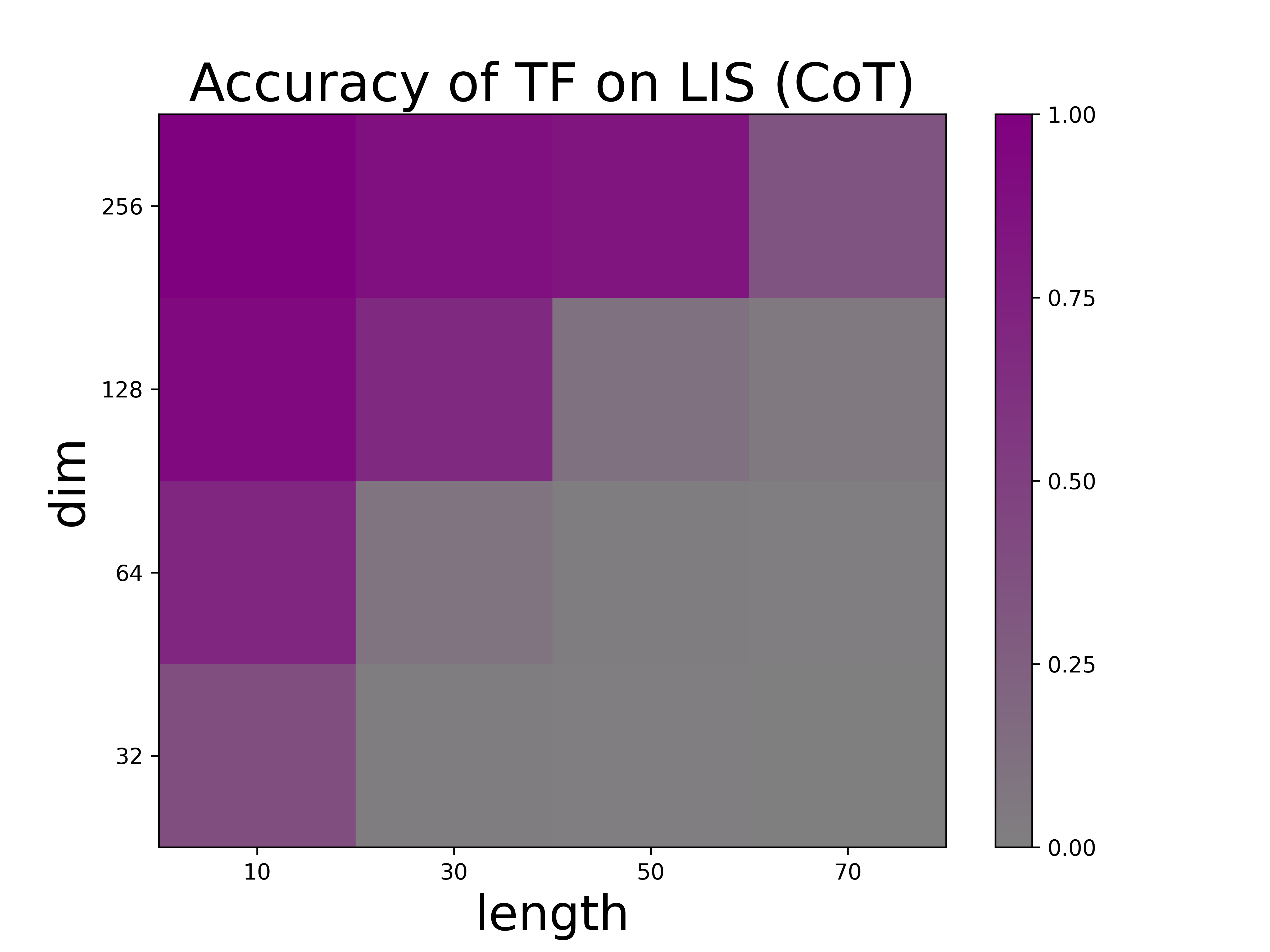}
		%		\subcaption{ }
		%		\label{fig:a}
	\end{subfigure}
	\vspace{-0cm}
	\caption{Accuracy of Mamba and Transformer under different task lengths and model sizes.}
	\vspace*{-0.3cm}
	\label{fig:2}
\end{figure}

\textbf{Experiments on CoT Task:}
In addition, we further evaluate model performance on solving DP problems, focusing on the  classic Longest Increasing Subsequence (LIS) problem following the setup of \citet{hedi_CoT}.
We examined two scenarios: (i)~\textbf{Direct:} the model is trained to directly output the final answer;~(ii)~\textbf{CoT:} the model is trained to output the answer through CoT reasoning, where it is required to generate both the correct reasoning steps and the final answer.
We investigate different task lengths $L$ (problem size) and model dimensions $d$. 
For Mamba, we adjust the number of layers to match or slightly exceed the size of the Transformer within the same $d$, with roughly two Mamba layers corresponding to one Transformer layer. More details can be seen in Appendix~\ref{app:ex}.

First, we present the accuracy curves during training in Figure \ref{fig:1}.
Under the Direct setting, Mamba outperforms Transformer due to the relatively short task length. 
However, under the CoT setting where significantly longer sequences are required, Transformer consistently performs better.
Notably, unlike Transformers, Mamba performs even worse under the CoT setting than in the direct setting. This is because Mamba’s constant-sized inference capacity limits its ability to handle long reasoning chains, whereas Transformers—with overhead that grows with sequence length—are better equipped to process such information.

Furthermore, in Figure \ref{fig:2}, we present results of Mamba and Transformer under different CoT settings. 
It can be seen that when the two models have comparable sizes, Mamba consistently underperforms Transformer, which supports our analysis in Section \ref{sec:cot}: while a constant-sized Transformer can effectively solve DP problems with CoT, Mamba with comparable size may struggle to do so unless given greater capacity.
These findings indicate that Mamba can not always get a free lunch: the inference cost that does not scale with sequence length may limit Mamba’s ability to solve certain tasks.

\section{Related work}
\textbf{SSMs and Attention Mechanism: }  The attention mechanism is a core component of LLMs \cite{transformer}. Drawing connections between SSMs and attention is a fascinating direction as it not only aids in our understanding of the Mamba structure but also facilitates the transfer of well-established acceleration techniques from attention mechanisms to Mamba \cite{yang2023gated,Mamba2,Mamba_thu_linear_attention,attention_ssm_rnn}. 
Based on observations of the similarities between them, \citet{Mamba2}~proposed the state space dual (SSD) layer based on SSMs to achieve significant improvements in training efficiency.
%\citet{attention_ssm_rnn}~introduce the Dynamical Systems Framework (DSF), under which attention and SSMs can be directly compared. 
Particularly, \citet{Mamba_thu_linear_attention}~reformulate the structure of SSMs to establish links with linear attention, aiming to find the key factors behind success in vision tasks.
We follow this convenient reformulation to explore Mamba's ability to perform the COPY operation.

\textbf{Comparisons between Transformers and Mamba:} More recent works compare the performance of Mamba and Transformers across various tasks from experimental and theoretical perspectives.
\citet{phonebook}~find that Transformers significantly surpass SSMs when facing tasks related to copying and retrieving information from context.
In addition, \citet{mamba_icl}~investigate Mamba's capability for in-context learning and demonstrate that Mamba outperforms Transformers in sparse parity learning while it is weaker in tasks involving non-standard retrieval functionality.
Similarly, \citet{Mamba_large_ex} conduct experiments on a larger scale and find that Mamba lag behind Transformers in tasks that require strong copying and long-context reasoning.
Our experiments reference the setups of these works and conduct similar investigations.
From the theoretical perspective, \citet{merrill_mamba}~demonstrate that similar to Transformers, Mamba is also unable to solve state tracking problems such as permutation composition while we focus on the different task of performing COPY operation.
\citet{phonebook}~investigate the ability of generalized SSMs to replicate entire sequences that satisfy some distribution and providing a lower bound for their state space memory whereas our work focuses on analyzing the impact of the distance of the specified token to be copied on the output error from the numerical approximation perspective in Theorem~\ref{constant_copy}.
Additionally, \citet{MQAR}~use communication complexity to show that recurrent models require at least $\Omega(N)$ to solve Multi-Query Associative Recall (MQAR) tasks, which is a lower bound guarantee.
In contrast, we provided an upper bound on the model size required for Mamba to achieve COPY using a constructive approach in our Theorem~\ref{linear_copy}.

\textbf{Transformers and modern RNNs with CoT:} Chain-of-Thought (CoT) \citep{CoT} is employed to enhance the performance of LLMs by enabling them to provide step-by-step reasoning before arriving at a final answer. 
It has been shown theoretically that Transformers with CoT exhibit significantly improved expressive power, allowing them to solve more complex problems compared to Transformers without CoT \citep{merrill_log_precision, hedi_CoT, merrill_CoT, CoT_matengyu, hedi_linear_attention}.
Our analysis of Mamba equipped with CoT follows the framework set by \citet{hedi_CoT,hedi_linear_attention} in their analysis of dynamic programming (DP) problems.
Additionally, \citet{Lvkaifeng_rnn_not_tf}~use communication complexity to show that even with CoT, any RNN model with $o(n)$ bit memory cannot solve tasks in $T \in \{\text{Index, AR, c-gram retrieval, Counting}\}$ of size $n$ for large enough $n$, which means that the lower bound for RNNs to solve these tasks is $w(n)$.
Different from this, our work explores the ability of Mamba equipped with CoT from the perspective of solving DP problems following the setting of \citet{hedi_CoT}~and show constructions for Mamba layers with linear-scaling size relative to the sequence length to solve DP problems, which can be seen as an upper bound of the model size required to solve DP.

\section{Limitations}
In this paper, inspired by the similarity between the SSM module in Mamba and linear attention, we explore Mamba's ability to perform the COPY operation.
Furthermore, we analyze Mamba's capability in handling CoT tasks, which are modeled as DP problems.
Our findings contribute to a deeper understanding of Mamba.
However, we would like to illustrate that while Mamba may slightly underperform Transformers in certain tasks, it offers advantages in others like sparse parity learning~\cite{mamba_icl} and can achieve comparable performance with lower costs~\cite{Mamba}.
The theoretical mechanisms behind these advantages remain to be further explored.
Additionally, as shown in \citet{mamba_icl, Mamba_large_ex, Lvkaifeng_rnn_not_tf}, exploring hybrid architectures that combine the strengths of Mamba and Transformers also merits further investigation.
We leave these aspects for future work.

% Bibliography entries for the entire Anthology, followed by custom entries
%\bibliography{anthology,custom}
% Custom bibliography entries only
%\bibliography{custom}
\bibliography{sample-base}

\newpage
\appendix

\onecolumn
\section{Appendix}

\subsection{Proof of Theorem \ref{constant_copy}} \label{app:constant_copy}
%\begin{assumption}\label{assum:copy}
%	For the given sequence $\mX = [\vx_i]_{i=1}^{N}$, the following conditions holds:
%	\begin{itemize}
	%		\item For any $i \in [N]$ and $j \le i$, if $j \notin \gS_i$, then $0 \le |\vc_i^T \vb_j| < \delta$.
	%		\item For any $i \in [N]$, there exists $\sum_{j \in \gS_i}\alpha_j|\vc_i^T \vb_j| < 1$.
	%		%		\item \textcolor{red}{Noting: Another assumption is needed about $\vc_i^T\vb_{pos(i)}$. }
	%	\end{itemize}
%\end{assumption}

%, that is, for all $i \in [N]$, we have $\| \vy_i - \vo_i \| \le \epsilon$.
\begin{theorem}[Approximate COPY operation with constant-size SSM module]
	Given a SSM module with constant size and input sequence $\vx_1, \vx_2, \dots, \vx_{N} \in [-M, M]^{d}$ such that Assumption \ref{assum:copy} holds, then for any $\epsilon > 0$, the SSM module can approximate the COPY operation at some position $i$, that is, $\| \vy_i - \vo_i \|_{\infty} \le \epsilon$  if there is $\vc_{i}^T\vb_{pos(i)} \ge c ( \frac{1}{a_{\min}})^{L-1} + d $ where $a_{\min} = \min_{pos(i) < j \le i} a_j$ and $c, d$ are constants related to $\epsilon$, $\delta$.
%	the following condition is satisfied:
%	\begin{equation}\label{eqthm}
%		\begin{aligned}
%			\vc_{i}^T\vb_{pos(i)} \ge \frac{\delta}{2} \left( \frac{a_{\max,<}}{1-a_{\max,<}} + \frac{\left(a_{\min,>}\right)^{-(L-1)}}{1- a_{\min, >}} \right) +\frac{1}{\alpha_{pos(i)}} \left( 1 - \frac{\epsilon}{ 2 M \|\vDelta\|_{\infty}} \right),
%		\end{aligned}
%	\end{equation}
%	where $a_{\max,<} = \max_{1 \le j < pos(i)} a_j $, $a_{\min, >} = \min_{pos(i) < j \le i} a_j$ and $a_j$, $\alpha_{pos(i)}$ are the forgetting coefficient and cumulative forgetting coefficient respectively as introduced in Section \ref{sec:attention}.
\end{theorem}

\begin{proof}
	We firstly show that given the $i$-th input $\vx_i$, a SSM module can retrieve the most relevant historical record $\vv_{pos(i)}$ from the hidden state and perform the defined COPY operation under the condition illustrated in our theorem.
	To achieve this, recalling that $\vv_j = \vDelta_j \odot \vx_j$ in Eq~(\ref{eq:atten}) and denoting $\|\vDelta\|_{\infty} = \max_{i\in [N]}\|\vDelta_i\|_{\infty}$, we have that
	\begin{align}
		\left\| \vy_i - \vv_{pos(i)} \right\|_{\infty} &= \left\| \sum_{j=1}^{i}\alpha_j(\vDelta_j \odot \vx_j) \vb_j^T \vc_i - \vv_{pos(i)} \right\|_{\infty} = \left \| \sum_{j=1}^{i}\alpha_j (\vc_i^T \vb_j) \vv_j - \vv_{pos(i)}\right\|_{\infty} \\
		& = \left\| \sum_{j \neq pos(i)} \alpha_j (\vc_i^T \vb_j) \vv_j + \alpha_{pos(i)} (\vc_i^T \vb_{pos(i)}) \vv_{pos(i)}  - \vv_{pos(i)}\right \|_{\infty}  \label{copy:1}\\
		& = \left\| \sum_{j \neq pos(i)} \alpha_j (\vc_i^T \vb_j) \vv_j   + \left[ \alpha_{pos(i)} (\vc_i^T \vb_{pos(i)}) - 1 \right] \vv_{pos(i)} \right \|_{\infty} \\
		& \le M \| \vDelta \|_{\infty}  \left( \sum_{j \neq pos(i)} \alpha_j|\vc_i^T\vb_j|  +   1 - \alpha_{pos(i)} \vc_i^T \vb_{pos(i)} \right) \label{copy:2},
	\end{align}
	where in (\ref{copy:2}) we use the fact that $\| \vx_i \|_{\infty} \le M$ and $ \alpha_{pos(i)}|\vc_i^T\vb_{pos(i)}| \le 1$ (from the second condition in Assumption \ref{assum:copy}).
	Thus, to prove $\left\| \vy_i - \vv_{pos(i)} \right\|_{\infty} \le \epsilon$, we can show that
	\begin{align}\label{copy:cb1}
		\vc_i^T\vb_{pos(i)} \ge \left( 1 - \frac{\epsilon}{M \|\vDelta\|_{\infty}} \right)\frac{1}{\alpha_{pos(i)}} + \sum_{j \neq pos(i)}\frac{\alpha_j}{\alpha_{pos(i)}} \big| \vc_i^T\vb_j \big|.
	\end{align}
	% 	\frac{\alpha_j}{\alpha_{pos(i)}}
	%	\vc_i^T\vb_j
	Recalling that $\alpha_{j} = \prod_{k=j+1}^{i} a_k$, we have
	\begin{equation}\label{copy:a_ratio}
		\frac{\alpha_j}{\alpha_{pos(i)}} = \left\{\begin{matrix}
			\prod_{k = j+1}^{pos(i)} a_k =  a_{pos(i)} a_{pos(i)-1} \dots a_{j+1}, &  \text{when  } j<pos(i) \\
			& \\
			\prod_{k = pos(i)+1}^{j}\frac{1}{a_k} = \frac{1}{a_ja_{j-1}\dots a_{pos(i)+1}}, & \text{when  } j>pos(i).
		\end{matrix}\right.
	\end{equation}
	Then we can consider the second term on the right side of Inequality (\ref{copy:cb1}) as
	\begin{align}\label{copy:sep}
		\sum_{j \neq pos(i)}\frac{\alpha_j}{\alpha_{pos(i)}}\big|\vc_i^T\vb_j \big|& = \sum_{j \notin S_i}\frac{\alpha_j}{\alpha_{pos(i)}}\big|\vc_i^T\vb_j \big| + \sum_{j \in S_i, j \neq pos(i)} \frac{\alpha_j}{\alpha_{pos(i)}}\big|\vc_i^T\vb_j \big| 
	\end{align}
	For the first term on the right side, we have
	\begin{align}
		\sum_{j \notin S_i}\frac{\alpha_j}{\alpha_{pos(i)}}\big|\vc_i^T\vb_j \big| & \le \delta \left(  \sum_{j \notin S_i, j < pos(i)} \frac{\alpha_j}{\alpha_{pos(i)}} + \sum_{j \notin S_i, j > pos(i)} \frac{\alpha_j}{\alpha_{pos(i)}}  \right) \label{copy:second0}\\
		& \le \delta \left(\sum_{j=1}^{pos(i)-1} \frac{\alpha_j}{\alpha_{pos(i)}} + \sum_{j=pos(i)+1}^{i} \frac{\alpha_j}{\alpha_{pos(i)}} \right)  \label{copy:second1}\\
		& \le  \delta \left( \sum_{k = 1}^{pos(i)-1}(a_{\max})^k + \sum_{k=1}^{i-pos(i)}\left(\frac{1}{a_{\min}}\right)^{k} \right) \label{copy:second2}\\
		& \le  \delta \left( \frac{a_{\max}(1-a_{\max}^{pos(i)-1})}{1-a_{\max}} + \frac{\left(\frac{1}{a_{\min}}\right)^{i - pos(i)} - 1}{1- a_{\min}} \right) \label{copy:second3}
	\end{align}
	where $a_{\max} = \max_{1 \le j < pos(i)} a_j $ and $a_{\min} = \min_{pos(i) < j \le i} a_j$.
	In (\ref{copy:second0}) we use the assumption that $\big|\vc_i^T\vb_j\big| le \delta$ for $j \notin S_i$; 
	in (\ref{copy:second1}) we use $\frac{\alpha_j}{\alpha_{pos(i)}} > 0$ for all $j \le i$;
	in (\ref{copy:second2}), we use the fact that $\frac{\alpha_j}{\alpha_{pos(i)}} \le (a_{\max})^{pos(i)-j}$ for $j < pos(i)$ and $\frac{\alpha_j}{\alpha_{pos(i)}} \le \left( \frac{1}{a_{\min}} \right)^{j-pos(i)}$ for $j > pos(i)$;
	in (\ref{copy:second3}), we use the formula for the sum of a geometric series.
	
	Furthermore, considering that the vector $\vv_{pos(i)}$ to be copied must exist in the $L$-local matching set $S_i$ so there is $i - L +1 \le pos(i) \le i$, we have the following	
	\begin{align}
		\sum_{j \notin S_i} \frac{\alpha_j}{\alpha_{pos(i)}}\big|\vc_i^T\vb_j\big| & \le  \delta \left( \frac{a_{\max}(1-a_{\max}^{pos(i)-1})}{1-a_{\max}} + \frac{\left(\frac{1}{a_{\min}}\right)^{L-1} - 1}{1- a_{\min}} \right) \label{copy:second4}\\
		& \le  \delta \left( \frac{a_{\max}}{1-a_{\max}} + \frac{\left(\frac{1}{a_{\min}}\right)^{L-1}}{1- a_{\min}} \right) \label{copy:second5}.
	\end{align}
	In (\ref{copy:second4}), we use the fact that $a_{\max} \in (0,1)$ while $\frac{1}{a_{\min}} > 1$; in (\ref{copy:second5}) we just ignore the term $a_{\max}^{pos(i)-1}$ for simplicity (in fact, we can find that when $i$ is sufficiently large, the effect of this term can be neglected).
	
	Meanwhile, with Assumption \ref{assum:copy}, we can consider the second term on the right side of Inequality (\ref{copy:sep}) as
	\begin{align}
		\sum_{j\in S_i, j \neq pos(i)}\frac{\alpha_j}{\alpha_{pos(i)}} \big| \vc_i^T\vb_j \big| \le \frac{1}{\alpha_{pos(i)}} - \vc_i^T\vb_{pos(i)}      \label{copy:third}.
	\end{align}
	Thus, considering (\ref{copy:cb1}), (\ref{copy:second5}) and (\ref{copy:third}), to show $\|\vy_i - \vo_i\|_{\infty} \le \epsilon$, the following condition should be satisfied
	\begin{align}
		\vc_i^T\vb_{pos(i)} \ge& \left( 1 - \frac{\epsilon}{M \|\vDelta\|_{\infty}} \right)\frac{1}{\alpha_{pos(i)}} +\delta \left( \frac{a_{\max}}{1-a_{\max}} + \frac{\left(\frac{1}{a_{\min}}\right)^{L-1}}{1- a_{\min}} \right) \\
		&+ \frac{1}{\alpha_{pos(i)}} -  \vc_i^T\vb_{pos(i)} .
	\end{align}
	After reformulating, there exists
	\begin{align}\label{copy:final}
		\vc_i^T\vb_{pos(i)} \ge& \left( 1 - \frac{\epsilon}{ 2 M \|\vDelta\|_{\infty}} \right)\frac{1}{\alpha_{pos(i)}} + \frac{\delta}{2} \left( \frac{a_{\max}}{1-a_{\max}} + \frac{\left(\frac{1}{a_{\min}}\right)^{L-1}}{1- a_{\min}} \right) \\
		= & \frac{\delta}{2 ( 1 - a_{\min} )} \left(\frac{1}{a_{\min}}\right)^{L-1} + \left( 1 - \frac{\epsilon}{ 2 M \|\vDelta\|_{\infty}} \right) \alpha_{pos(i)}^{-1} + \frac{\delta a_{\max}}{2(1-a_{\max})} \\
		= & c \left(\frac{1}{a_{\min}}\right)^{L-1} + d, 
	\end{align}
	where we let $c = \frac{\delta}{2 ( 1 - a_{\min} )}$ and use $d$ to denote the remaining terms. 
	Thus, we complete our proof.
\end{proof}

\subsection{Proof of Theorem \ref{constant_copy_attention}}\label{app:attention_copy}
To provide the theorem, we first give the definitions and assumptions for the attention module in Transformers, which are really similar to those for the SSM module from the perspective of linear attention.
For some input sequence $\vx_1, \vx_2, \dots, \vx_{N}$, the output of the attention module we consider can be formulated as:
\begin{equation*}
	\vy_i = \sum_{j\le i} a_{i,j}\vv_{j} = \sum_{j\le i} \frac{e^{\vq_i^T \vk_j}}{\sum_{t \le i} e^{\vq_i^T \vk_t}} \vv_{j},~~ i = 1,2, \dots,N
\end{equation*}
where $a_{i,j}$ the attention scores calculated by the ${\rm Softmax(\cdot)}$ function and $\vq_i = \mW_q \vx_i$, $\vk_i = \mW_k \vx_i$, $\vv_i = \mW_v \vx_i$ are the query, key and value vectors respectively.
Then, we give the definition of COPY Operation for attention module. 
\begin{definition}[COPY Operation for the attention module]
	For a given attention module and input sequence $\vx_1, \vx_2, \dots, \vx_{N}$, the output of COPY operation is a sequence of vectors $\vo_1, \vo_2, ..., \vo_N$ with $\vo_i = \vv_{pos(i)}$ where $pos(i) \in [N]$ is the position we want to copy.
\end{definition}
In fact, this definition is very similar to the COPY for the SSM module, except that the vector being copied is changed from the historical records $\vv_i = \vDelta_i \odot \vx_i$ in the SSM module to the value vectors in the attention module directly. 
From the perspective of linear attention, the former is precisely the value vectors in attention, so the two definitions are closely related:
Similarly, we define the definition of $L,\delta$)-Matching Set for the attention module:
\begin{definition}[($L,\delta$)-Matching Set for the attention module] 
	For a given attention module and input sequence $\vx_1, \vx_2, \dots, \vx_{N}$, the ($L,\delta$)-matching set for $\vx_i$ is defined as $\gS_i(L,\delta) =  \{ j \mid |\vq_i^T \vk_j| \ge \delta, i-L < j\le i \}$.
\end{definition}
Next, we make the following assumption:
\begin{assumption}\label{assum:attention copy}
	For a given attention module and input sequence $\vx_1, \vx_2, \dots, \vx_{N}$, there exist some $L\in \sN$ and $\delta \in \sR$ such that for any $ i \in [N] $, $S_i(L,\delta)$ exists and $pos(i) \in \gS_i(L,\delta)$. In addition, for any $j \notin \gS_i(L,\delta)$, $|\vq_i^T \vk_j| < \delta$.
\end{assumption}
Note that, since the attention scores in Transformers are naturally normalized by Softmax function such that $\sum{j \in \gS_i} \le 1$, we ignore the second condition as in Assumption \ref{assum:copy} here, which would impose constraints on the stability of the output.
Then, we give our theorem as following
\begin{theorem}[Approximate COPY operation with constant-size attention module]
	Given a attention module with constant size and input sequence $\vx_1, \vx_2, \dots, \vx_{N} \in [-M, M]^{d}$ such that Assumption \ref{assum:attention copy} holds, then for any $\epsilon > 0$, the attention module can approximate the COPY operation at some position $i$, that is, $\| \vy_i - \vo_i \|_{\infty} \le \epsilon$  if there is $\vq_i^T \vk_j \ge \log\tilde{L} + c$ where $\tilde{L} = \max\{L, i - |\gS_{i}(L,\delta)|\}$ and $c$ is a constant related to $\epsilon$, $\delta$.
\end{theorem}
\begin{proof}
	For some given $i$-th input $\vx_i$, we have that 
	\begin{align}
		\left\| \vy_i - \vv_{pos(i)} \right\|_{\infty} &= \left\| \sum_{j \le i} a_{i,j} \vv_i - \vv_{pos(i)} \right\|_{\infty} = \left\| \sum_{j\le i, j \neq pos(i)} a_{i,j} \vv_i - \left(1 - a_{i,pos(i)}\right) \vv_{pos(i)}) \right\|_{\infty} \\
		& \le \left(1 - a_{i,pos(i)} +  \sum_{j \le i} a_{i,j} \right)  M \left\| \mW_V\right\|_{\infty} = 2 \left(1 - a_{i,pos(i)} \right) M \left\| \mW_V\right\|_{\infty}.
	\end{align}
	To achieve $\| \vy_i - \vo_i \| \le \epsilon$, we can show that
	\begin{align}
		a_{i,pos(i)} = \frac{e^{\vq_i^T \vk_{pos(i)}}}{\sum_{j\neq pos(i)} e^{\vq_i^T \vk_j} + e^{\vq_i^T \vk_{pos(i)}}} \ge 1-\frac{\epsilon}{2 M \left\| \mW_V\right\|_{\infty}}.
	\end{align}
	Here, we omit the condition $j \le i$ for simplicity of notation and this is equivalent to show that
	\begin{align}
		\sum_{j\neq pos(i)} e^{\vq_i^T \vk_j - \vq_i^T \vk_{pos(i)}} + 1 \le \frac{2 M \left\| \mW_V\right\|_{\infty}}{2 M \left\| \mW_V\right\|_{\infty} - \epsilon}.
	\end{align}
	Further, we have
%	\begin{align}
%		e^{\vq_i^T \vk_{pos(i)}} \ge  \frac{2 M \left\| \mW_V\right\|_{\infty} - \epsilon} {\epsilon} \sum_{j\neq pos(i)} e^{\vq_i^T \vk_j}.
%	\end{align}
%	\begin{align}
%		\vq_i^T \vk_{pos(i)} \ge \log \left[   
%		 \frac{2 M \left\| \mW_V\right\|_{\infty} - \epsilon} {\epsilon} \sum_{j\neq pos(i)} e^{\vq_i^T \vk_j}
%		 \right].
%	\end{align}
	\begin{align}
		\vq_i^T \vk_{pos(i)} \ge \log \left[ 
		\frac{2 M \left\| \mW_V\right\|_{\infty} - \epsilon} {\epsilon} \left( \sum_{j \in \gS_i, j\neq pos(i)} e^{\vq_i^T \vk_j } +  \sum_{j \notin \gS_i} e^{\vq_i^T \vk_j }   \right)
		\right], 
	\end{align}
	where we use $\gS_i$ to denote $\gS_i(L,\delta)$ for simplicity of notation.
	Then we just need to show that $\vq_i^T \vk_{pos(i)}$ is larger than the upper bound of the right side, that is,
	\begin{align}
		\vq_i^T \vk_{pos(i)} \ge \log \left[ 
		\frac{2 M \left\| \mW_V\right\|_{\infty}} {\epsilon} \left( L e^{\rho} + (i - |\gS_i|) e^{\delta}   \right)
		\right], 
	\end{align}
	where $\rho = \max_{j \in \gS_i, j \neq pos(i)} \vq_i^T \vk_j$.
	Here we use the fact that there are at most $L$ terms in the matching set $\gS_i$ and  the assumption that $ \vq_i^T \vk_j \le \delta$ for $j \notin \gS_i$.
	By denoting $\tilde{L} = \max\{L, \gS_i\}$, we can show that,
	\begin{align}
		\vq_i^T \vk_{pos(i)} \ge \log \tilde{L} +  \log \frac{2}{\epsilon} (e^\delta + e^\rho ) M \left\| \mW_V\right\|_{\infty}. 
	\end{align}
	Thus we complete our proof.
\end{proof}

\subsection{Proof of Theorem \ref{linear_copy}}\label{app:linear_copy}
\begin{theorem}[Perform COPY operation with linear-scaling size]
	Given sequence $\vx_1, \vx_2, \dots, \vx_{N} \in [-M, M]^{d}$, there exists a SSM module with size $O(N)$ that can perform the defined COPY operation, that is, $\vy_i = \vo_i$ for any $i \in [N]$.
\end{theorem}
\begin{proof}
	Recalling that the output of SSM module in Mamba can be rewritten in the form of Eq~(\ref{eq:atten}), where  $(\vDelta_j \odot \vx_j)$, $\vb_j$, $\vc_i$ corresponds to $\vv_j$, $\vk_j$ and $\vq_i$ respectively. 
	Our intuition is to store all the information of $\vv_i$ from our history in the hidden state space of size $O(N)$ (similar to the KV cache in attention format), and then use the appropriate $ \vc_i $ as the query for retrieval. 
	We can set $ \mA = \mO $ so that Eq~(\ref{eq:atten}) further transforms in a way that does not forget historical information, that is, $\vy_i = \sum_{j=1}^{i}(\vDelta_j \odot \vx_j) \vb_j^T \vc_i = \sum_{j=1}^{i}\vv_j \vb_j^T \vc_i$.
	
	Let $\tilde{\vx}_i = [\vx_i, \ve_i, \ve_{pos(i)}] \in \sR^{d+2N}$ where $\ve_i \in \sR^N$ denote the one-hot vector where only the $i$-th value is 1.
	We use $\ve_i$, $\ve_{pos(i)}$ to denote the current position and the position of historical token we want to copy respectively. 
	Then, we construct $\mW_\vb = [\mO_{N \times d}, \mI_{N}, \mO_N] \in \sR^{N\times(d+2N)}$ so that $\tilde{\vb}_i = \mW_\vb \tilde{\vx}_i = \ve_i$.
	Then, at the $i$-th step, the information newly recorded in the state space will be $\vv_i \tilde{\vb}_i^T = \vv_i \ve_i^T \in \sR^{d \times N}$ and the updated state space will be $ \mH_i = \mH_{i-1} + \sum_{j=1}^{i-1}\vv_j \tilde{\vb}_j^T + \vv_i \tilde{\vb}_i^T = [ \vv_1, \vv_2, \dots,  \vv_i, \mO_{d \times (N-i)}]$ thus at the last step, we can record all historical information in the state space by $\mH_T = \sum_{j=1}^{N}\vv_j \tilde{\vb}_j^T = [\vv_1, \vv_2, \dots, \vv_N]$.
	Then, at the output process, we can construct $\mW_\vc = [\mO_{N \times d}, \mO_N, \mI_{N}] \in \sR^{N\times(d+2N)}$ so that $\tilde{\vc}_i = \mW_\vc \tilde{\vx}_i = \ve_{pos(i)}$.
	Thus, the output will be $\vy_i = \mH_i \tilde{\vc}_i =  \sum_{j=1}^{i}\vv_j \vb_j^T \ve_{pos(i)} = \vv_{pos(i)}$.
	
	At the same time, we note that in the above process, the vectors $\ve_i, \ve_{pos(i)}$ to denote position in $\tilde{\vx}_i$ are sparse. 
	In fact, we only need to use two indices $p_i = i$ and $p_{pos(i)} = pos(i)$ to store them thus the total size to store all indices is $O(N)$.
	Additionally, $\mW_\vb, \mW_\vc$ are also sparse so we require at most $O(N)$ space to store these two matrices.
	Therefore, the model size we need is $ O(Nd) $ that scales linearly with the length $N$.
	Thus, we complete our proof.
\end{proof}
\begin{remark}
	It should be noted that we can also degenerate a linear-size Mamba block into the aforementioned SSM module by deactivating the gating branch to achieve the COPY operation.
	For example, we can set $\mW_1 = \mW_{3} = \mI, \vb_1 = \vzero, \mW_2 = \mO$ and $\vb_2 = k\vone$ where the constant $k$ satisfies $\sigma(k) = 1$.
	In addition, we note that $\vx_{i} \in [-M,M]^d$ and $p_i, p_{pos(i)} \in [1,N]$ thus the largest value involved in the aforementioned process will not exceed $NM^2$ (the largest value in hidden states), which is upper bounded by $O(poly(M,N))$.
	All parameters being upper bounded by $O(poly(M,N))$ means that the problem can also be solved by the same Mamba block with $log(N)$ precision, which has been also adopted in previous works \cite{merrill_log_precision,hedi_CoT,Lvkaifeng_rnn_not_tf}.
\end{remark}

\subsection{Examples for DP Problems}\label{app:dpexample}
We consider the problem of finding the minimum edit distance, where we aim to transform string $s^{(1)}$ into string $s^{(2)}$ with lengths $n_1 = |s^{(1)}|$ and $n_2 = |s^{(2)}|$, respectively. 
The costs for insertion, deletion, and substitution are $a$, $b$, and $c$, respectively. 
Obviously, the input sequences for this problem are $\{ s^{(1)}, s^{(2)}\}$ and the scale of the problem is $\vn = [n_1, n_2]^T$.
Further, the state space is $\gI_{\vn} = \{ (i,j) | 1 \le i \le n_1, 1 \le j \le n_2  \}$. 
Let $\rm dp(j,k)$ represent the cost of transforming the first $j$ characters of $s_1$ into the first $k$ characters of $s_2$. 
The transition function can then be expressed as:  
$$
\text{dp}(j, k) =\begin{cases}    ak & \text{if } j = 0 \\    bj & \text{if } k = 0 \\    \min \big(         \text{dp}(j, k-1) + a, \text{dp}(j-1, k) + b,\\        \qquad \text{dp}(j-1, k-1) + c \mathbb{I}[s_j^{(1)} \neq s_k^{(2)}]    \big) & \text{otherwise}\end{cases}
$$
Finally, the aggregation function selects $\rm dp(n_1, n_2)$ as the final answer.  In the example above, the size of the state space, all intermediate values, and the lengths of the input strings will all be upper-bounded by $\rm poly(n_1, n_2)$. Moreover, the operations required by the involved functions can be approximated with polynomial efficiency by a constant-size MLP, as shown in Lemmas in Appendix \ref{app:cot}. Therefore, such DP problems satisfy Assumption \ref{assum:dp}.
It can be referenced from Section 4.1 of \citet{hedi_CoT} for more detailed examples for DP problems.

\subsection{Explanation of $m$-locality:}\label{app:m-locality} 
The most common arithmetic tasks satisfy the $m$-locality assumption, which are also frequently encountered in current LLM applications. Consider an arithmetic expression of length $n$; the CoT reasoning required to compute it step by step may have a total output length of $T = O(n^2)$. However, the window size $m$ required for computing each intermediate state does not exceed $n$.
For example, consider the expression $7 * (7 + 5 + 10) =$. The input length is $n = 10$, and its complete CoT output would be:
$7 * ( 7 + 5 + 10 ) = 7 * ( 1 + 10 ) = 7 * 0 = 0$.
As we can see, aside from the input part, the generation of each subsequent token requires a context window no larger than the input length $n$. Therefore, this task satisfies the $m$-locality.

Another example that satisfies the $m$-locality is the Edit Distance problem. When computing the 2D DP matrix, the number of entries (or tokens) required for each step does not exceed $m \le \max\{n_1, n_2\}$. For instance: x g v $|$ x g o  \texttt{<p>} 0 2 4 ,  2 0 2 ,  4 2 3 , \texttt{<p>} 3 where \texttt{<p>} denotes the separate token to enclose the DP matrix arranged row by row.
If we copy the input sequence when computing each row of the DP matrix, such as:
\begin{equation*}
	\begin{aligned}
		&\text{x g v}~|~\text{x g o}~\texttt{<p>}~\text{x g v}~|~\text{x g o}~0~2~4~, \text{x g v}~|~\text{x g o}~\texttt{<p>}~\\
		&2~0~2~,~\text{x g v}~|~\text{x g o}~\texttt{<p>}~4~2~3~,\texttt{<p>}~3	
	\end{aligned}
\end{equation*}
then the entire process becomes strictly $m$-locality, where $m \le 2n + 2$ with $n = |s_1| + |s_2|$ representing the input size. Meanwhile, the total CoT output length is roughly $O(n^2)$.

However, for the LIS problem, computing each state might require revisiting the entire input string, and the corresponding CoT output length itself is $T = O(n)$.
This does not satisfy $m$-locality because $m$ is at least $n$ and cannot be significantly smaller than the output length $T$ in terms of order.

\subsection{Proof of Theorem 3}\label{app:cot}
In this part, we first present the necessary lemmas before completing the proof of Theorem~\ref{DP_Mamba}. 
In fact, these lemmas are very similar to those presented by \citet{hedi_CoT} in Appendix C.1 regarding MLP.
The main difference is that we need to degenerate Mamba blocks to MLP and consider different activation functions (SiLU for Mamba instead of GELU). 
Thus, we only provide detailed proofs of these relevant lemmas when necessary.
\begin{lemma}[Perform multiplication]\label{lemma:multiply}
	For any $\epsilon > 0$ and $M > 0$, there exists Mamba block parameters with $l_{\infty}$ norm upper bounded by $O(poly(M, 1/\epsilon))$ such that $| f(a,b) -  ab| \le \epsilon $ holds for all $a, b \in [-M, M]$.
\end{lemma}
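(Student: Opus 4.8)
The plan is to prove the lemma in two stages: first degenerate a single Mamba block into an ordinary one–hidden–layer network with $\silu$ activation, and then show that such a network approximates the product map to accuracy $\epsilon$ using a constant number of neurons and only polynomially large weights. The reduction step mirrors the degeneration trick already used in the proofs of Theorems~\ref{constant_copy} and~\ref{linear_copy}, but now aimed at isolating the gated branch rather than the SSM branch.

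First I would collapse the recurrent machinery. Recall from Eq.~(\ref{mblock}) that the block computes $\mW_3\cdot\mathrm{SSM}(\mW_1\mX+\vb_1)\odot\sigma(\mW_2\mX+\vb_2)$. By choosing the SSM parameters so that the hidden state never accumulates across positions (killing the projection $\vb_i$ into the state and letting the skip term $\vd\odot\vx_i$ carry the signal, so that $\mathrm{SSM}$ acts as a fixed position-wise linear map), and then setting that linear pre-factor identically to $\vone$, the block reduces to the token-wise map $\vx\mapsto\mW_3\,\sigma(\mW_2\vx+\vb_2)$, a plain $\silu$ network. Encoding the two scalars as $\vx=(a,b)$ in two input coordinates, it then suffices to realise $ab$ with such a network and read it off from one output coordinate.

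Second, I would build the product from $\silu$ units via a polarization identity combined with a second-order finite difference. Writing $\sigma$ for $\silu$, one checks $\sigma(0)=0$ and $\sigma''(0)=\tfrac12\neq0$, so that $\sigma(\lambda x)+\sigma(-\lambda x)=\tfrac12\lambda^2 x^2+O(\lambda^4x^4)$; hence $x^2\approx\frac{2}{\lambda^2}\bigl[\sigma(\lambda x)+\sigma(-\lambda x)\bigr]$. Using $ab=\tfrac14\bigl[(a+b)^2-(a-b)^2\bigr]$, I would set
\begin{equation*}
f(a,b)=\frac{1}{2\lambda^2}\Bigl[\sigma\bigl(\lambda(a+b)\bigr)+\sigma\bigl(-\lambda(a+b)\bigr)-\sigma\bigl(\lambda(a-b)\bigr)-\sigma\bigl(-\lambda(a-b)\bigr)\Bigr],
\end{equation*}
which uses four $\silu$ neurons: the entries of $\mW_2$ are $\pm\lambda$ (applied to the precomputed sums $a\pm b$, themselves linear in $\vx$), and the entries of $\mW_3$ are $\pm\frac{1}{2\lambda^2}$.

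Finally I would carry out the error and norm bookkeeping. Since $a\pm b\in[-2M,2M]$, the cancellation of the odd Taylor terms leaves a remainder of order $\frac{2}{\lambda^2}\cdot O(\lambda^4 M^4)=O(\lambda^2 M^4)$, uniformly in $a,b$, once the fourth derivative of $\sigma$ is bounded on the compact range. Choosing $\lambda^2=\Theta(\epsilon/M^4)$ makes this at most $\epsilon$, while forcing $\|\mW_3\|_\infty=\Theta(1/\lambda^2)=\Theta(M^4/\epsilon)=O(\mathrm{poly}(M,1/\epsilon))$ and $\|\mW_2\|_\infty=\Theta(\lambda)=O(1)$, so all parameters are polynomially bounded as claimed. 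The main obstacle I anticipate is the reduction step: making the passage from the gated SSM block to a clean \emph{pointwise} $\silu$ MLP fully rigorous, since the gating structure $(\cdot)\odot\sigma(\cdot)$ and the SSM pass-through differ from a standard feedforward layer; the finite-difference approximation itself is routine once a uniform bound on the higher derivatives of $\silu$ over $[-2M,2M]$ is recorded.
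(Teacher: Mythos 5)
Your proposal is correct and takes essentially the same route as the paper's proof: the paper also deactivates the SSM branch (setting $\mW_1 = \mO$, $\vb_1 = \vone$, $\mW_\vb = \mO$, $\vd = \vone$ so the SSM output is the constant $\vone$) to collapse the block to $\vx \mapsto \mW_3\,\sigma(\mW_2\vx + \vb_2)$, and then uses the identical four-neuron polarization construction, your small scaling factor being exactly the reciprocal of the paper's large one. The only cosmetic difference is in the error bookkeeping — you exploit the cancellation of odd Taylor terms and bound the fourth derivative of $\silu$, while the paper bounds the third-order Lagrange remainder — and both give polynomially bounded parameters.
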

\begin{proof}
	We first show that a two-layer MLP using the SiLU activation function can achieve the above operation. We use the same construction as in Lemma C.1. in \citet{hedi_CoT}, except that we use the SiLU activation function instead of GELU. 
	Specifically, let $g: \sR^2 \rightarrow \sR$ be a two-layer MLP with SiLU activation, and the hidden dimension is 4, then we can construct $f$ as
	\begin{equation}\label{multiply_construct}
		g(a,b) = \frac{\lambda^2}{2} \left( \sigma{\left(\frac{a+b}{\lambda} \right)} + \sigma{\left(\frac{-a-b}{\lambda} \right)} - \sigma{\left(\frac{a-b}{\lambda} \right)} - \sigma{\left(\frac{-a+b}{\lambda} \right)}   \right),
	\end{equation}
	where $\lambda$ is a scaling factor. 
	In addition, considering $\sigma(x) = \frac{x}{1+e^{-x}}$, $\sigma'(x) = \frac{1+(x+1)e^{-x}}{(1+e^{-x})^2}$, $\sigma''(x) = \frac{e^{-x}(2+2e^{-x} + xe^{-x}- x)}{(1+e^{-x})^3}$, we have $\sigma(0) = 0$, $\sigma'(0) = \frac{1}{2}$, $\sigma''(0) = \frac{1}{2}$. 
	Then, using the Taylor expansion with the Lagrange remainder, we can obtain that 
	\begin{align*}
		& \sigma{\left(\frac{a+b}{\lambda} \right)} + \sigma{\left(\frac{-a-b}{\lambda} \right)} - \sigma{\left(\frac{a-b}{\lambda} \right)} - \sigma{\left(\frac{-a+b}{\lambda} \right)}  \\ 
		= & ~ \frac{1}{2!}\frac{1}{2} \left( \left( \frac{a+b}{\lambda} \right)^2 + \left( \frac{-a-b}{\lambda} \right)^2 - \left( \frac{a-b}{\lambda} \right)^2 - \left( \frac{-a+b}{\lambda} \right)^2 \right) + R_2 =  ~\frac{2ab}{\lambda^2} + R_2,
	\end{align*}
	where $R_2$ is the second-order remainder term. 
	Assuming that $\lambda > 2M$,  we have $ | \frac{\pm a \pm b}{\lambda}| < \frac{2M}{\lambda} < 1$ and then
	\begin{align*}
		|R_2| &\le \frac{4}{3!} \left( \frac{2M}{\lambda}\right)^2 \max_{x \in [-1, 1]}|\sigma'''(x)| \\
		&=  \frac{4}{3!} \left( \frac{2M}{\lambda}\right)^2 \max_{x \in [-1, 1]}\biggl|\frac{(x-3)e^{-x}-4xe^{-2} + (x+3)e^{-3x}}{(1+e^{-x})^4} \biggl| \\
		&\le \frac{4}{3!} \left( \frac{2M}{\lambda}\right)^2 \frac{4e + 4e^2 + 4e^3}{(1+e^{-1})^4} \\
		&\le \frac{4}{3!}\frac{8M^3}{\lambda^3} \frac{81}{2} \\
		&= \frac{216M^3}{\lambda^3}.
	\end{align*}
	Thus if we set $\lambda \ge \frac{216M^3}{2\epsilon}$  we will have $|g(a,b) - ab|  \le \frac{\lambda^2}{2} |R_2| \le \epsilon$.
	
	Then, we note that a Mamba block $\vf$ defined as Eq~(\ref{mblock}) can degenerate into the above MLP $g$ by deactivating its SSM branch. 
	Specifically, we only need to set $\mW_1$ to be zeros and $\vb = \vone $ so that the input of the SSM branch is a constant 1, that is, 
	%	thereby causing the Mamba block to degenerate into a two-layer MLP, that is, 
	\begin{equation*}
		\vf(\vx) = \mW_3 \cdot \mathrm{SSM}(\vone) \odot \sigma(\mW_2\vx + \vb_2).
	\end{equation*}	
	In the SSM module, we can set $\mW_\vb$ to be zeros, that is, no new information will be retained in the hidden state.
	Following this, we set $\vd = \vone$ resulting that given $\vx = \vone$,  we have $\mathrm{SSM}(\vone) = \vy = \vc^T\mH + \vd\odot \vx = \vc^T \vone + \vone \odot \vone = \vone$. 
	Thus the SSM branch can be deactivated and the Mamba block will degenerate into a two layer MLPs, that is, 
	\begin{equation}\label{mlp}
		\vf(\vx) = \mW_3  \sigma(\mW_2\vx + \vb_2).
	\end{equation}
	Furthermore, given $\vx = [a,b]$, we can set $\mW_2 \in \sR^{4 \times 2}$ and $\mW_3 \in \sR^{4 \times 1}$ to meet the two-layer MLP $g$ as Eq~(\ref{multiply_construct}). 
	Additionally, we note that all parameters of this Mamba block can be upper bounded by $O(poly(M, 1/\epsilon))$ under the $l_{\infty}$ norm.
	Thus, we complete our proof.
\end{proof}
\begin{remark}
	It should be noted that here we have only provided one possible construction and this is not unique. For example, in the process of deactivating the SSM branch, we could also choose to make $\vDelta$ sufficiently large and correspondingly $\tilde{\mA}$  sufficently small with $\mA \le \vzero$ so that the hidden states approximates zeros. 
	In fact, the expressive power of an Mamba block with two branches should be stronger than that of a two-layer MLP since it already encompasses the latter. Nevertheless, we still provide one possible construction here.
\end{remark}

\begin{lemma}[Approximate two-layer MLPs with ReLU]\label{lemma:mlp}
	Let $\vg: \sR^{d_1} \rightarrow \sR^{d_2}$ be a two-layer MLP with ReLU activation, and all parameters are upper bounded by $M$. 
	Then, for any $\epsilon > 0$ , there exists a Mamba block $\vf$ and parameters upper bounded by $O(poly(M,1/\epsilon))$ in the $l_{\infty}$ norm, such that for all $\vx \in \sR^{d_1}$, we have $\|\vf(\vx) - \vg(\vx) \|_{\infty} \le \epsilon$. 
\end{lemma}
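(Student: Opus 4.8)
The plan is to reduce the statement to the purely feed-forward fact that a two-layer $\silu$ network can approximate a two-layer $\relu$ network, and then invoke the degeneration mechanism already established in the proof of Lemma~\ref{lemma:multiply}. Write the target network as $\vg(\vx) = \mU\,\relu(\mW\vx + \vb)$ with all parameters bounded by $M$ in absolute value (an output bias, if present, can be absorbed by appending one hidden unit whose pre-activation is a large positive constant $B$, since $\silu(B) = B/(1+e^{-B}) \to B$). By the construction in Lemma~\ref{lemma:multiply}, setting $\mW_1$ to zero, $\vb=\vone$, $\vd = \vone$ and $\mW_\vb = \mO$ deactivates the SSM branch and collapses the Mamba block to $\vf(\vx) = \mW_3\,\silu(\mW_2\vx + \vb_2)$, i.e. an ordinary two-layer $\silu$ MLP. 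Hence it suffices to choose $\mW_2,\vb_2,\mW_3$ so that this $\silu$ network is uniformly $\epsilon$-close to $\vg$.

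The key observation I would isolate first is that a scaled $\silu$ recovers $\relu$ with an error independent of its argument. Concretely, for a scale $s > 0$ one has $\tfrac{1}{s}\silu(sz) = \tfrac{z}{1+e^{-sz}}$, and I claim $\bigl| \tfrac{1}{s}\silu(sz) - \relu(z)\bigr| \le \tfrac{1}{es}$ for every $z \in \sR$. For $z \ge 0$ the gap equals $z\,e^{-sz}/(1+e^{-sz}) \le z e^{-sz}$, whose maximum over $z \ge 0$ is attained at $z = 1/s$ and equals $1/(es)$; the case $z < 0$ is symmetric. The essential point is that this bound is uniform over the whole real line, so no boundedness of $\vx$ is required.

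With this in hand the construction is immediate: set $\mW_2 = s\mW$, $\vb_2 = s\vb$, and $\mW_3 = \tfrac{1}{s}\mU$, so that in each hidden coordinate $k$ the block computes $\tfrac{1}{s}\silu\bigl(s(\mW\vx+\vb)_k\bigr)$, an approximation of $\relu\bigl((\mW\vx+\vb)_k\bigr)$. Propagating the per-coordinate estimate through $\mU$ via the $\ell_\infty\to\ell_\infty$ (max row-$\ell_1$) bound gives $\|\vf(\vx) - \vg(\vx)\|_\infty \le r\,\|\mU\|_\infty /(es) \le rM/(es)$, where $r$ is the hidden width of $\vg$; choosing $s = \Theta(rM/\epsilon)$ forces this below $\epsilon$. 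Finally I would check the magnitudes: $\mW_2$ and $\vb_2$ are bounded by $sM = O(\mathrm{poly}(M,1/\epsilon))$, while $\mW_3$ is bounded by $M/s \le M$, and the auxiliary SSM-deactivation parameters inherited from Lemma~\ref{lemma:multiply} are constants, so every parameter is $O(\mathrm{poly}(M,1/\epsilon))$ in the $\ell_\infty$ norm.

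The only genuinely delicate point is the uniformity of the $\silu$-to-$\relu$ approximation over the unbounded domain $\sR^{d_1}$, since a naive local Taylor argument would only control the error on a compact set and the scaling $s$ would then have to depend on $\|\vx\|$. The resolution is exactly the closed-form estimate $1/(es)$ above, which holds pointwise everywhere; once it is established, the remaining ingredients (the SSM deactivation, the linear error propagation through $\mU$, and the bookkeeping of parameter sizes) are routine and parallel the proof of Lemma~\ref{lemma:multiply}.
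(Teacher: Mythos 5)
Your proposal is correct and follows essentially the same route as the paper's proof: deactivate the SSM branch (exactly as in the multiplication lemma) to collapse the block to $\mW_3\,\silu(\mW_2\vx+\vb_2)$, scale the hidden layer by $s$ and the output layer by $1/s$, and use a pointwise bound on $\bigl|\tfrac{1}{s}\silu(sz)-\relu(z)\bigr|$ that is uniform over all of $\sR$ (your $1/(es)$ via maximizing $ze^{-sz}$ versus the paper's $1/\lambda$ via $e^{x}+1>x$ are interchangeable), then propagate through the output matrix and pick $s=\Theta(rM/\epsilon)$. The only additions beyond the paper's argument are cosmetic: you also handle hidden and output biases, which the paper's construction omits but which change nothing essential.
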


\begin{proof}
	Similar to Lemma \ref{lemma:multiply}, once again, we deactivate the SSM branch, causing a Mamba block to degenerate into the form of Eq~\ref{mlp}. Considering a two-layer MLP with a ReLU activation function denoted as $g(\vx) = \overline{\mW}_{3}\mathrm{ReLU}(\overline{\mW}_2\vx)$ where $\overline{\mW}_2 \in \sR^{d \times d_1}$ and  $\overline{\mW}_3 \in \sR^{d_2 \times d}$, we can set similar parameters for the degenerated Mamba blcok, that is, we consider $\mW_2 = \lambda \overline{\mW}_2$, $\mW_{3} = \frac{1}{\lambda}\overline{\mW}_3$ in Eq~(\ref{mlp}) where $\lambda$ is some large constant.
	In order to prove the lemma, we need to show that $\| \vf(\vx) - \vg(\vx) \|_{\infty} \le \epsilon$ with some $\lambda$ upper bounded by $O(poly(M,1/\epsilon))$.
	
	Considering a scalar $z \in \sR$, we firstly consider the upper bound of the following equation:
	\begin{equation*}
		\biggl| \relu(z) - \frac{1}{\lambda} \silu(\lambda z) \biggl|  = \biggl| \max(z,0) - \frac{z}{1+e^{-\lambda z}} \biggl| = \frac{|z|}{e^{\lambda |z| } + 1} \le \frac{1}{\lambda},
	\end{equation*}  
	where we use the fact that $e^{x} + 1 > x $ for any $x \ge 0$. Then, let $\vz = \overline{\mW}_2\vx$, we can show that for any $\vz \in \sR^{d}$,
	\begin{align}
		\left \|\overline{\mW}_3 \mathrm{ReLU}(\vz) -  \frac{1}{\lambda} \overline{\mW}_3 \silu (\lambda \vz)   \right\|_{\infty} & \le \|\overline{\mW}_3\|_{\infty} \left\| \relu(\vz) - \frac{1}{\lambda}\silu(\lambda \vz)\right \|_{\infty} \\
		& \le Md \left\| \relu(\vz) - \frac{1}{\lambda}\silu(\lambda \vz)\right \|_{\infty} \\
		& \le Md \max_{z\in \sR} \biggl| \relu(z) - \frac{1}{\lambda} \silu(\lambda z) \biggl| \\
		& \le \frac{Md}{\lambda}.
	\end{align}
	Then, if we set $\lambda > \frac{Md}{\epsilon}$, we will have $\| \vf(\vx) - \vg(\vx) \|_{\infty} \le \epsilon$ and all parameters of the Mamba block is upper bounded by $O(poly(M,1/\epsilon))$.
	Thus, we complete our proof.
\end{proof}
\begin{remark}
	We have proven that a Mamba block can approximate a two-layer MLP with ReLU activation function, and since the latter can perform many basic operations, including linear transformations and selection operations as constructed in Lemma C.3 and Lemma C.5 in \citet{hedi_CoT}, we can use Lemma \ref{lemma:mlp} to adopt the same construction, enabling the Mamba block to perform these operations. We present the following colloary more specifically, and the detailed proof can be found in the above mentioned part in \citet{hedi_CoT}.
\end{remark}

\begin{lemma}[Perform linear transformation, easily derived from Lemma \ref{lemma:mlp} and Lemma C.3 in \citet{hedi_CoT}]\label{lemma:linear}
	Let $\mW \in \sR^{d_2 \times d_1}$ be any matrix used for implementing linear transformations upper bounded by $M$ and $\vf: \sR^{d_1} \rightarrow \sR^{d_2}$ be a Mamba block.
	Then, for any $\epsilon > 0$, there exist Mamba block parameters with $l_{\infty}$ norm bounded by $O(poly(M,1/\epsilon))$, such that for any $\vx \in \sR^{d_1}$, we have $\| \vf(\vx) - \mW\vx\|_{\infty} \le \epsilon$.
\end{lemma}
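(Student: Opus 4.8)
The plan is to reduce the claim to Lemma~\ref{lemma:mlp} by exhibiting a two-layer ReLU MLP that computes $\mW\vx$ \emph{exactly}, and then invoking the Mamba-block-to-MLP approximation already established. The only ingredient needed beyond Lemma~\ref{lemma:mlp} is the elementary identity $t = \mathrm{ReLU}(t) - \mathrm{ReLU}(-t)$, valid for every scalar $t$, which applied coordinatewise gives $\vx = \mathrm{ReLU}(\vx) - \mathrm{ReLU}(-\vx)$; this is exactly the content of Lemma C.3 in \citet{hedi_CoT}.

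Concretely, I would construct the target MLP $\vg(\vx) = \overline{\mW}_3\,\mathrm{ReLU}(\overline{\mW}_2\vx)$ with hidden dimension $2d_1$, where $\overline{\mW}_2 \in \sR^{2d_1 \times d_1}$ stacks $\mI_{d_1}$ over $-\mI_{d_1}$ (so that its hidden activations are the concatenation of $\mathrm{ReLU}(\vx)$ and $\mathrm{ReLU}(-\vx)$) and $\overline{\mW}_3 = [\,\mW \;\; -\mW\,] \in \sR^{d_2 \times 2d_1}$ is the horizontal concatenation of $\mW$ and $-\mW$. Then $\vg(\vx) = \mW\,\mathrm{ReLU}(\vx) - \mW\,\mathrm{ReLU}(-\vx) = \mW(\mathrm{ReLU}(\vx) - \mathrm{ReLU}(-\vx)) = \mW\vx$, with no approximation error incurred at the MLP level. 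The entries of $\overline{\mW}_2$ lie in $\{-1,0,1\}$ and those of $\overline{\mW}_3$ are bounded by $M$, so every parameter of $\vg$ is bounded by $\max(1,M) = O(M)$.

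Next I would apply Lemma~\ref{lemma:mlp} to this specific $\vg$: since $\vg$ is a two-layer ReLU MLP with parameters bounded by $O(M)$, for any $\epsilon > 0$ there is a Mamba block $\vf$ whose parameters are bounded in $l_\infty$ norm by $O(\mathrm{poly}(M,1/\epsilon))$ and that satisfies $\|\vf(\vx) - \vg(\vx)\|_\infty \le \epsilon$ for all $\vx \in \sR^{d_1}$. Combining this with the exact identity $\vg(\vx) = \mW\vx$ immediately yields $\|\vf(\vx) - \mW\vx\|_\infty \le \epsilon$, which is the desired conclusion.

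I do not anticipate a genuine obstacle here, since the statement is precisely the composition of two facts already in hand: the exact ReLU representation of a linear map and the Mamba simulation of a ReLU MLP (Lemma~\ref{lemma:mlp}). The only point deserving care is the bookkeeping of the intermediate parameter magnitudes—one must verify that the MLP weights stay $O(M)$ so that the polynomial bound inherited from Lemma~\ref{lemma:mlp} is genuinely $O(\mathrm{poly}(M,1/\epsilon))$—but this is immediate because those weights are either $\pm 1$ or copies of entries of $\mW$.
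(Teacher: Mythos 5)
Your proposal is correct and follows essentially the same route as the paper: the paper derives this lemma by combining Lemma~\ref{lemma:mlp} with the exact two-layer ReLU representation of a linear map from Lemma C.3 of \citet{hedi_CoT}, which is precisely the $\vx = \mathrm{ReLU}(\vx) - \mathrm{ReLU}(-\vx)$ construction you spell out explicitly. Your added bookkeeping that the MLP weights are bounded by $\max(1,M)$ is a correct and worthwhile detail that the paper leaves implicit.
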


\begin{lemma}[Perform select operation, easily derived from Lemma \ref{lemma:mlp} and Lemma C.4 in \citet{hedi_CoT}] \label{lemma:select}
	Define the selection function $\vg: \sR^d \times \sR^d \times \sR \rightarrow \sR^d$ as follows:
	\begin{equation}
		g(\vx, \vy, t) = \left\{\begin{matrix}
			\vx & if~t > 0\\
			\vy & if~t < 0
		\end{matrix}\right.
	\end{equation}
	Let $\vf: \sR^{d} \times \sR^{d} \times \sR \rightarrow \sR^{d}$ be a Mamba block. Then, for any 
	$\epsilon > 0$, $\alpha > 0 $, and $M > 0$, there exist Mamba parameters with $l_{\infty}$ norm bounded by $O(poly(M, 1/\alpha, 1/\epsilon))$, such that for all $\vx \in [-M, M]^d$, $\vy \in [-M, M]^d$, and $t \in [-\infty, -\alpha] \cup [\alpha, +\infty]$, we have $\|\vf(\vx, \vy, t) - g(\vx, \vy, t) \|_{\infty} \le \epsilon $.
\end{lemma}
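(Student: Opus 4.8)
The plan is to reduce this lemma to the two facts already in hand: the selection function can be realized by a two-layer ReLU MLP (Lemma~C.4 of \citet{hedi_CoT}), and any such MLP is approximable to arbitrary accuracy by a degenerated Mamba block (Lemma~\ref{lemma:mlp}). I would therefore split the target accuracy $\epsilon$ into two halves: first produce a two-layer ReLU MLP $\vg_{\text{MLP}}$ with $\|\vg_{\text{MLP}}(\vx,\vy,t) - g(\vx,\vy,t)\|_\infty \le \epsilon/2$ on the prescribed domain, then invoke Lemma~\ref{lemma:mlp} to obtain a Mamba block $\vf$ with $\|\vf - \vg_{\text{MLP}}\|_\infty \le \epsilon/2$, and conclude by the triangle inequality.

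First I would recall (or reconstruct) the ReLU gate. Writing $g(\vx,\vy,t) = \vy + (\vx-\vy)\,\mathbf{1}\{t>0\}$, the only nonlinearity is the indicator $\mathbf{1}\{t>0\}$, which on the margin-separated domain $t \in (-\infty,-\alpha]\cup[\alpha,\infty)$ can be produced by a clipped-linear unit $\min(\max(ct,0),1)$ with slope $c \ge 1/\alpha$; this clip is itself a difference of two ReLUs, and the margin guarantees it equals $1$ for $t \ge \alpha$ and $0$ for $t \le -\alpha$. Composing this gate coordinatewise with the affine maps that carry $\vx,\vy$ through yields a two-layer ReLU MLP whose weights are bounded by $O(\text{poly}(M,1/\alpha,1/\epsilon))$. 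Alternatively I would simply cite the explicit construction of Lemma~C.4 in \citet{hedi_CoT}, whose output error and parameter bound are already of exactly this form.

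Then I would apply Lemma~\ref{lemma:mlp} with target error $\epsilon/2$ to this two-layer ReLU MLP. Lemma~\ref{lemma:mlp} yields a Mamba block $\vf$, obtained by deactivating the SSM branch so that it degenerates to the form of Eq~(\ref{mlp}), satisfying $\|\vf(\vx,\vy,t) - \vg_{\text{MLP}}(\vx,\vy,t)\|_\infty \le \epsilon/2$ for all inputs, with $l_\infty$ parameter norm polynomial in the MLP's parameter bound and in $1/\epsilon$; substituting the MLP bound gives the claimed $O(\text{poly}(M,1/\alpha,1/\epsilon))$. The triangle inequality $\|\vf - g\|_\infty \le \|\vf - \vg_{\text{MLP}}\|_\infty + \|\vg_{\text{MLP}} - g\|_\infty \le \epsilon$ then closes the argument.

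The main obstacle is bookkeeping rather than ideas: I must ensure the ReLU selection gate achieves error $\le \epsilon/2$ while its sharpness constant $c$, and hence the resulting weight magnitudes, grow only polynomially in $1/\alpha$ and $1/\epsilon$, which is precisely where the margin assumption $|t|\ge\alpha$ is used; and I must verify that composing the two approximation steps does not inflate the parameter bound beyond a polynomial, i.e.\ that feeding the MLP's $O(\text{poly}(M,1/\alpha,1/\epsilon))$ weight bound into Lemma~\ref{lemma:mlp} still returns a polynomial bound in the same quantities. Apart from this accounting, the result is a direct composition of the two cited lemmas.
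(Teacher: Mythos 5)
Your proposal is correct and follows essentially the same route as the paper: the paper's proof of this lemma is precisely the observation (made in the remark after Lemma~\ref{lemma:mlp}) that the selection function is realizable by a two-layer ReLU MLP as constructed in \citet{hedi_CoT}, and that Lemma~\ref{lemma:mlp} lets a degenerated Mamba block approximate any such MLP, exactly your two-step composition with the triangle inequality. The only cosmetic difference is that you also sketch the ReLU gate explicitly (where your product-with-indicator phrasing would need the standard margin trick, e.g. $\relu(x_i+\lambda t)-\relu(-x_i+\lambda t)=2x_i\mathbf{1}\{t>0\}$ for $\lambda\ge 2M/\alpha$, rather than a literal multiplication), but since you also offer the citation to Lemma C.4 as the paper does, this is not a gap.
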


Next, we show that one Mamba layer or several Mamba layers can implement indicator functions through the select operation.
We mainly focus on the usual indicator functions $\sI[a\neq b]$, $\sI[a>b]$ and $\sI[a<b]$.

\begin{lemma}[Perform indicator function]\label{lemma:indicator}
	Define the indicator function $\sI(a, b, \circ): \sR^2 \times \{\neq, > , <\} \rightarrow \{0,1\}$ where $a,b \in [-M, M]$. The output of the function will be $1$ if $a \circ b$ is satisfied otherwise the output will be $0$. Let $f: \sR^2 \rightarrow \sR$ be a Mamba block. Then, for any $\epsilon > 0 $, there exist Mamba parameters with $l_{\infty}$ norm upper bounded by $O(poly(M,1/\epsilon))$, such that for any $a,b \in [-M, M]$ and $\circ \in \{ \neq, >, < \}$, we have $\| f(a,b) - \sI(a,b,\circ)\|_{\infty} \le \epsilon$.
\end{lemma}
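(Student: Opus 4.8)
The plan is to construct each of the three indicator functions $\sI[a \neq b]$, $\sI[a > b]$, and $\sI[a < b]$ by leveraging the select operation already established in Lemma~\ref{lemma:select}, since the indicator functions are essentially select operations applied to the constant inputs $1$ and $0$. First I would handle the two comparison indicators $\sI[a>b]$ and $\sI[a<b]$, as these are the most direct. For $\sI[a>b]$, I would set $t = a - b$, which requires a linear transformation realizable by Lemma~\ref{lemma:linear}, and then apply the select function $g(\vx, \vy, t)$ from Lemma~\ref{lemma:select} with $\vx = 1$ and $\vy = 0$: whenever $a > b$ we get $t > 0$ and the output is $1$, and whenever $a < b$ we get $t < 0$ and the output is $0$. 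Symmetrically, $\sI[a<b]$ uses $t = b - a$. Composing the linear transformation layer with the select layer and tracking the accumulated error through the triangle inequality gives the desired $\epsilon$-approximation.

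The case $\sI[a \neq b]$ is the main obstacle, because the select operation in Lemma~\ref{lemma:select} is only guaranteed to behave correctly when its scalar argument $t$ is bounded away from zero, i.e., $t \in [-\infty, -\alpha] \cup [\alpha, +\infty]$, whereas the condition $a \neq b$ does not directly produce such a gap — when $a$ and $b$ are very close but unequal, $a - b$ can be arbitrarily small. My plan here is to express $\sI[a \neq b] = \sI[a>b] + \sI[a<b]$ (equivalently $1 - \sI[a=b]$), decomposing the ``not equal'' test into the two strict comparisons I have already built. Adding the two outputs produces $1$ exactly when $a > b$ or $a < b$ (i.e., $a \neq b$) and $0$ when $a = b$, which realizes the desired indicator. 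I would implement the two comparison indicators in parallel Mamba blocks (or sequentially across layers) and combine their outputs with a final linear-transformation layer from Lemma~\ref{lemma:linear} that performs the addition.

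A subtlety I would need to address is the implicit assumption on the separation parameter: the comparison indicators inherit the $\alpha$-gap requirement from Lemma~\ref{lemma:select}, so strictly speaking the construction gives exact outputs only when $|a - b| \ge \alpha$ for some fixed resolution $\alpha$. In the intended application this is harmless because, under the log-precision regime invoked in Assumption~\ref{assum:dp}, all quantities live on a discrete polynomially-bounded grid, so distinct values are separated by at least a fixed inverse-polynomial $\alpha$; I would state this explicitly and fold the resulting dependence into the $O(poly(M, 1/\epsilon))$ bound on the parameters. Finally, I would account for error propagation: each $\epsilon$-accurate sub-block contributes a controlled error, and since only a constant number of blocks (two select operations, a couple of linear transformations) are composed, choosing each sub-block's tolerance to be $\epsilon/C$ for an appropriate constant $C$ keeps the total error below $\epsilon$ while maintaining the $l_\infty$ parameter bound of $O(poly(M, 1/\epsilon))$. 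Since all the building blocks degenerate from a Mamba block via deactivating the SSM branch (as in Lemmas~\ref{lemma:multiply} and~\ref{lemma:mlp}), the final composition is again realizable by Mamba layers, completing the proof.
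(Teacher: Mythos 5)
Your treatment of the two comparison indicators coincides with the paper's: both implement $\sI[a>b]$ as the select operation $g(1,0,a-b)$ from Lemma~\ref{lemma:select} (the paper composes the preprocessing map $[a,b]\mapsto[1,0,a-b]$ exactly into the first linear layer of the select block, rather than spending a separate approximate layer via Lemma~\ref{lemma:linear} as you do, which saves one error-composition step but is otherwise the same idea). The genuine difference is the $\neq$ case. You exploit the mutual exclusivity of $a>b$ and $a<b$ to write $\sI[a\neq b]=\sI[a>b]+\sI[a<b]$, so the combination step is purely linear and can even be absorbed into the output matrix of a single block computing both selects in parallel. The paper instead uses the De Morgan form $\sI[a\neq b]=1-(1-\sI[a>b])\cdot(1-\sI[a<b])$, which costs a second Mamba block performing the product via Lemma~\ref{lemma:multiply}. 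Your route is simpler and strictly more economical for this lemma, since it bypasses the multiplication lemma and one nonlinear block; the paper's multiplicative form is what one would need for a disjunction of events that are not mutually exclusive, but that generality is unused here.

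One caveat, which you partially flag but do not resolve, and which the paper silently shares: at $a=b$ the select argument is $t=0$, which lies outside the region $|t|\ge\alpha$ where Lemma~\ref{lemma:select} guarantees anything. Your grid/log-precision argument ensures that \emph{distinct} values are $\alpha$-separated, but exact equality still occurs in the domain --- indeed it is precisely the case the $\neq$ indicator must detect, and downstream uses such as $\sI[f^{\rm state}_t\neq 1]$ in the proof of Theorem~\ref{DP_Mamba} hinge on it. With the natural symmetric select construction, each comparison outputs roughly $1/2$ at $t=0$, so your sum gives roughly $1$ and the paper's expression roughly $3/4$, while the target value is $0$. A clean fix is to build the comparisons from a one-sided threshold, e.g.\ approximating $\sI[t\ge\alpha/2]$ so that $t=0$ falls on the ``reject'' side. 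Since the paper's own proof has the identical hole, this is a weakness you share with, rather than introduce relative to, the published argument.
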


\begin{proof}
	We first show that a Mamba block can implement $\sI[a>b]$ and $\sI[a<b]$.
	For $\sI[a>b]$, it is equivalent to consider $g(1,0,a-b)$ where $g(\cdot)$ defined in Lemma~\ref{lemma:select}.
	So firstly we can use a linear layer with appropriate parameters $\mW_0, \vb_0$ to convert the input $[a,b]$ into the vector $[1,0,a-b]$.
	Then we can use Lemma~\ref{lemma:select} to implement $\sI[a>b]$ by changing the parameters of the first linear layer from $\{ \mW_1, \vb_1\}$ to $\{ \mW_1\mW_0, \vb_1 + \mW_1\vb_0\}$.
	The proof for $\sI[a < b]$ is similar as well.
	
	Noticing that $\sI[a \neq b] = 1 - (1 - \sI[a>b]) \cdot (1 - \sI[a < b])$, we can implement $\sI[a \neq b]$ through the following layers:
	Firstly, we can use one Mamba block to implement $1 - \sI[a>b]$ and $1 - \sI[a<b]$ simultaneously, where the hidden dimension will be $8$ and the output is a vector $[1 - \sI[a>b], 1 - \sI[a<b]]$.
	Then, another Mamba block is constructed to implement the multiplication $(1 - \sI[a>b]) \cdot (1 - \sI[a < b])$ according to Lemma~\ref{lemma:multiply} and the appropriate outermost linear layer parameters are chosen to simultaneously achieve multiplication by a negative sign and addition of a bias of 1, where the hidden dimension will be $4$ and the output will be $\sI[a \neq b]$.
	Thus, we complete our proof.
\end{proof}

Furthermore, following the setting of \citet{hedi_CoT}, we also make the following assumption:
\begin{assumption}\label{assum:dp}
	Given input sequences $s^{(1)}, s^{(2)}, \dots, s^{(N)}$, we consider the following constraints for the DP problem:
	\begin{itemize}
		\item For any $i \in \gI_\vn$, there exists constants $N_{\vs}$, $N_{\rm dp}$ and $N_{\gA}$ such that $|\gI_i| \le N_{\vs}$, $|\gV_{{\rm dp}(i)}| \le N_{\rm dp}$ and $|\gA_{\vn}| \le N_{\gA}$.
		\item The size of the state space $|\gI_\vn|$, the embeddings of all tokens in the input sequence, all intermediate DP values $({\rm dp}(i)$ for $i \in \gI_{\vn})$,  and the final answer $A$ can all be polynomially upper bounded by the problem size $\vn$.
		\item The functions used to solve the DP problem, including the function $f_{\gI}$ to determine the next state, the transition function $f_{\gT}$, the aggregation function $f_{\gA}$ and $\gA(\vn)$ can all be approximated with polynomial efficiency by a constant-size MLP (with the SiLU activation function).
	\end{itemize}
\end{assumption}
\begin{remark}
	The first constraint of the Assumption \ref{assum:dp} illustrates that the number of input tokens and previous DP values used in the transition function at each step can be upper bounded by $ N_{\vs}$ and $N_{\rm dp}$. 
	In addition, the number of DP values used in aggregation is at most $N_{\gA}$.
	This is reasonable because the number of inputs for solving each state in a DP problem should be finite.
	The second constraint is a restriction on the magnitude of the intermediate values, allows that all involved inputs and outputs used in functions can be represented by the log-precision model.
	The third constraint allows a constant-sized degenerated Mamba to implement functions required to solve the DP, which has been proved by above Lemmas~\ref{lemma:multiply}-\ref{lemma:indicator}. 
	In fact, due to the first constraint, the sizes of inputs and outputs of these functions will be a constant related to $\{N_{\vs}, N_{\rm dp}, N_{\gA}\}$.
	
	In fact, Assumption 2 covers many dynamic programming (DP) problems commonly encountered in real-world scenarios, such as basic arithmetic operations, the Longest Increasing Subsequence (LIS), and Edit Distance (ED). The sizes of these tasks grow polynomially with respect to the input size. While the functions involved in solving subproblems of these tasks may be non-smooth, we have shown in Lemmas~\ref{lemma:multiply}-\ref{lemma:indicator} that they can all be approximated by MLPs with polynomial efficiency. Therefore, Assumption 2 is reasonable for DP problems.
\end{remark}

Now, based on the basic operations that can be implemented by the Mamba blocks as discussed above, we present the proof of Theorem~\ref{DP_Mamba}:
\begin{theorem}[Perform DP problems with CoT]
	Considering any DP problem and given input sequences that satisfies Assumption~\ref{assum:dp}, for any integer $T \in \sN$, there exists several Mamba layers with size $O(T)$, such that the answer generated by the Mamba layers will be correct when the length of the answer is no more than $T$.
\end{theorem}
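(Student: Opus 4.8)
The plan is to mirror the Transformer-with-CoT construction of \citet{hedi_CoT}, replacing its attention-based retrieval with the linear-size associative memory of Theorem~\ref{linear_copy} and its MLP gadgets with the degenerated-Mamba gadgets of Lemmas~\ref{lemma:multiply}--\ref{lemma:indicator}. The Mamba layers generate the CoT string $s^{(1)}\mid\dots\mid s^{(N)}\mid (i_1,\mathrm{dp}(i_1))\dots$ autoregressively, and I would show that a \emph{constant} number of stacked layers implement a single generation step, so the whole computation is a fixed stack whose only length-dependent component is an $O(T)$-sized copy block.

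First I would fix positional encodings so that each token carries its index, and invoke Theorem~\ref{linear_copy} to turn one Mamba block into an exact key--value store: after processing the prefix, the block holds every value vector $\vv_j$ keyed by position and, given any query position, returns the corresponding $\vv_j$ exactly. Since the stored values include both the input tokens $\vs_j$ and every DP value $\mathrm{dp}(k)$ already emitted, a single such block of size $O(T)$ serves all retrieval needs. I would then assemble one generation step from a constant number of layers: a first group of constant-size blocks computes the current state $i$ from the previous one via $f_\gI$ and produces the index sets $\gD_i$ and $\gV_{\mathrm{dp}(i)}$, which is legitimate because Assumption~\ref{assum:dp} makes these maps constant-size-MLP computable and hence realizable by Lemma~\ref{lemma:mlp} together with Lemmas~\ref{lemma:linear},~\ref{lemma:select},~\ref{lemma:indicator}; a second group feeds the computed indices as queries to the copy block to retrieve the at most $N_\vs$ inputs and $N_{\mathrm{dp}}$ previous DP values (a constant number of parallel queries, still $O(T)$ total); a third constant-size group applies $f_\gT$ to the retrieved bundle to obtain $\mathrm{dp}(i)$ and emits $(i,\mathrm{dp}(i))$. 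The final answer is produced analogously by retrieving the $\le N_\gA$ values indexed by $\gA_\vn$ and applying $f_\gA$.

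For correctness I would use the log-precision argument: each gadget is only $\epsilon$-accurate, but since the problem size $\vn$ bounds all DP values polynomially and they are discrete, choosing $\epsilon<1/2$ and rounding after each step keeps every emitted token exact, so errors do not compound over the $\le T$ steps. The size accounting is then immediate: the copy block is $O(T)$, all function blocks are $O(1)$, and the layer count is constant, giving total size $O(T)$.

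The hard part will be the dynamic ``compute-then-retrieve'' composition. Theorem~\ref{linear_copy} hard-codes the target $pos(i)$ inside the input, whereas here the retrieval targets must be computed on the fly from the current state and the DP structure and then threaded into the query slot $\vc_i$ of a later copy block. I would need to verify that an index (or one-hot) encoding of a computed position can be produced by the preceding MLP blocks and consumed as the query of the copy block, that the $O(T)$ memory correctly accumulates newly generated DP values with retrievable keys as generation proceeds, and that the small approximation errors in the computed indices round back to the correct discrete positions so that retrieval remains exact.
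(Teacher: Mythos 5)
Your proposal is correct and follows essentially the same route as the paper's own proof: both mirror the construction of \citet{hedi_CoT}, replacing attention-based retrieval with the $O(T)$-sized associative memory of Theorem~\ref{linear_copy} and the MLP gadgets with degenerated Mamba blocks (Lemmas~\ref{lemma:multiply}--\ref{lemma:indicator}), organized into the same pipeline of compute-next-state, compute-needed-indices, COPY-retrieve, apply $f_{\gT}$, and finally aggregate with $f_{\gA}$, with the same $O(\tilde{N}Td)$ size accounting. The ``compute-then-retrieve'' issue you flag as the hard part is resolved in the paper exactly as you anticipate --- Block~2 computes the position indices $\vp^{\vs}_t, \vp^{\rm dp}_t$ via degenerated MLPs and Block~3 consumes them as the query slot of the copy construction --- so it is a step to write out carefully rather than a genuine obstacle.
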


\begin{proof}
	Firstly, we illustrate the input format for the DP problem. We follow the embedding format in the proof of Theorem 4.7 in \citet{hedi_CoT}, that is, assuming that the input at any step of solving the DP problem using CoT is a sequence of tokens  embedded as follows:
	\begin{equation*}
		\vx_{t}^{(0)} = \left[  \ve_{t}^{\rm input}, \ve_{t}^{\rm state}, \ve_{t}^{\rm dp}, \ve_{t}^{\rm answer}, \ve_{t}^{\rm sep}, t, 1 \right],
	\end{equation*}
	where the specific value of each part is depend on the content represented by the current token. More specifically, each part can be described as:
	\begin{itemize}
		\item If the current position denotes a input token, then we set $e^{\rm input}_t$ as the embedding of the input and simultaneously set $\ve_{t}^{\rm state} = \ve_{t}^{\rm dp} = \ve_{t}^{\rm answer} = \ve_{t}^{\rm sep} = \vzero$.
		\item If the current position is the final answer, then $\ve_t^{\rm anwser}$ denotes the embedding of the answer and we set  $\ve_{t}^{\rm input} = \ve_{t}^{\rm state} = \ve_{t}^{\rm dp} =  \ve_{t}^{\rm sep} = \vzero$.
		\item If the current position denotes the $j$-th separator $|$ between input sequences , then we set $\ve_{t}^{\rm sep} = \ve_j$ and  $\ve_{t}^{\rm input} = \ve_{t}^{\rm state} = \ve_{t}^{\rm dp} =  \ve_{t}^{\rm answer} = \vzero$.
		\item If the current position denotes an intermediate DP state, then we use $\ve_t^{\rm state}$ to denote the embedding of the DP state and $\ve_t^{\rm dp}$ denotes the corresponding value.
		Similarly, other part will be set to be $\vzero$.
		\item The scalar $t$ denotes the current position in the whole sequence, which holds the value for all above cases.
	\end{itemize} 
	
	We illustrate that here we use a concatenation operation to replace the residual connection definced in Eq~(\ref{mlayer}), which is a technique also used by \citet{hedi_CoT,hedi_linear_attention} in similar proofs. 
	This is because, from the perspective of expressive capability, the two operations are equivalent: the output of a Mamba block $\vy = \vf(\vx)$ concatenated with the input (that is, $[\vy, \vx]$) can also be represented using the residual connection: $\vg([\vx, \vzero]) + [\vzero, \vx] = [\vy, \vzero] + [\vzero, \vx] = [\vy, \vx]$ where $\vg: \sR^{2d} \rightarrow \sR^{2d}$ is another Mamba block and part of its parameters to be same as $\vf$ and others are set to be $\vzero$.
	Conversely, the  concatenation can implement residual connection by using a linear projection.
	
	Here, we show our construction of several Mamba layers to solve the DP problem, which is composed of different blocks to perform different tasks:

	\noindent \textbf{Block 1:} The first block aims to calculate the problem size $\vn$ and the embedding of the next state $\ve^{\rm next\_state}_t$.  
	This process can be described as follows:
	
	\begin{itemize}
		\item \textbf{Compute the problem size $\vn$:}
		(i) First, we can replicate the position of the token $t_{sep, 1}, t_{sep,2}, \dots, t_{sep, N}$ using the COPY operation. This can be achieved with a Mamba layer of size $O(Ntd)$ according to Theorem~\ref{linear_copy};
		(ii) Then, we calculate the size of the problem as $\vn = [t_{\rm sep, 1} - 1, t_{\rm sep, 2} - t_{\rm sep, 1} - 1, \dots, t_{\rm sep, N} - t_{\rm sep, N-1} -1]$, which can be done by applying a linear transformation using one Mamba layer, as shown by Lemma~\ref{lemma:linear}.
		
		\item \textbf{Obtain the next state $\ve^{\rm next\_state}$:}
		According to Assumption \ref{assum:dp}, the function $\ve^{\rm next\_state} = f(\vn, \ve^{\rm state})$ which determines the next state, can be approximated by constant-sized MLPs. 
		Thus, this can also be implemented by having several Mamba layers degenerate into MLPs.
	\end{itemize}
	The output after this step can be written as:
	\begin{equation*}
		\vx_{t}^{(1)} = [ \ve_{t}^{\rm input}, \ve_{t}^{\rm state}, \ve^{\rm next\_state}, \ve_{t}^{\rm dp}, \ve_{t}^{\rm answer}, \ve_{t}^{\rm sep}, \vn, t, 1  ]
	\end{equation*}
	
	\noindent \textbf{Block 2:} The second block is mainly constructed to find the indices of input tokens and intermediate DP values that are needed to calculate the DP value corresponding to $\ve^{\rm next\_state}$. Specifically, this can be described as follows: 
	\begin{itemize}
		\item \textbf{Calculate the needed indices:} We calculate the positions of the input token $\vp^{\vs}_t =  I_{\vs}(\vn, \ve^{\rm state})$ and the positions of tokens that correspond to needed DP values $\vp^{\rm dp}_t = I_{\rm dp}(\vn,  \ve^{\rm state})$. If $I_{\vs}(\vn, \ve^{\rm state}) = \emptyset$ or $I_{\rm dp}(\vn,  \ve^{\rm state}) = \emptyset$, we set the positions to be a special value $\gamma$.
		According to Assumption~\ref{assum:dp}, these two functions can be done by constant-size MLPs thus can be approximated by degenerated Mamba layers. 
		
		\item \textbf{Set the flag:} 
		(i) Set the flag $f^{\rm answer}_t$ based on whether the DP value of the current state is needed in the final aggregation function. This can be achieved by several Mamba layers with Assumption \ref{assum:dp} that the function $\gA = f(\vn, \vs)$ can be approximated by MLPs and additionally using Lemma~\ref{lemma:indicator} to implement $\gI[\ve^{\rm state}_t \neq \ve^{\rm state}_j]$ where $\ve^{\rm state}_j \in \gA$.
		(ii) Set the flag $f^{\rm state}_t$ to denote whether the current state is the last state. This can be implemented by checking $\sI[\ve^{\rm next\_state}_t \neq \vzero]$ with Mamba layers using Lemma~\ref{lemma:indicator}.
		
	\end{itemize}
	The output result after this step can be written as:
	\begin{equation*}
		\vx_{t}^{(2)} = [ \ve_{t}^{\rm input}, \ve_{t}^{\rm state}, \ve^{\rm next\_state}, \ve_{t}^{\rm dp}, \ve_{t}^{\rm answer}, \ve_{t}^{\rm sep}, \vn, \vp^{\vs}_t, \vp^{\rm dp}_t, f^{\rm answer}_t, f^{\rm state}_t,  t, 1  ]
	\end{equation*}
	
	\noindent \textbf{Block 3:} This block is designed to calculate the DP value for the next state. In detail, the implementation involves the following steps:
	\begin{itemize}
		\item \textbf{Check the flag:}
		We check the flag $f_t^{\rm state}$ using several Mamba layers using \ref{lemma:indicator} to implement $\sI[f_t^{\rm state} \neq 1]$ using Lemma~\ref{lemma:indicator}. If $f_t^{\rm state} = 1$, the current denote is the last state and we just need to set $\vp_t = \gamma \vone$ where $\vp_t$ denotes $\vp_t^{\vs}$ and $\vp_t^{\rm dp}$, which implies $I_{\vs}(\vn, \ve^{\rm state}) = \emptyset$ and $I_{\rm dp}(\vn,  \ve^{\rm state}) = \emptyset$, that is, no input tokens or DP values are needed.
		\item \textbf{Obtain the needed embeddings:} 
		If $f_t^{\rm state} \neq 1$ , then (i) We copy the input token embeddings $\ve^{\rm input}$ at positions $\vp_t^{\vs}$. This COPY operation can be implemented by a Mamba layer of size $O(N_std)$ using Theorem~\ref{linear_copy}; 
		(ii) Simultaneously, we copy the embeddings  of DP values at positions $\vp_t^{\rm dp}$, which can be achieved by a Mamba layer of size $O(N_{\rm dp} td)$. If the position is empty, we just need to check $\sI[\vp_t \neq \gamma \vone]$ and set the needed embeddings $\ve^{\rm input}$ or $\ve^{\rm dp}$ to be some special token. Totally, the size of Mamba layers in this step is $O((N_s+N_{\rm dp})td)$.
		\item \textbf{Calculate the DP value:} We calculate the DP value $\ve_{t}^{\rm next\_state}$ for the next state with the Assumption \ref{assum:dp} that the transition function can be approximated by several Mamba layers using Lemma~\ref{lemma:mlp}.
	\end{itemize}
	The output result after this step can be written as:
	\begin{equation*}
		\vx_{t}^{(2)} = [ \ve_{t}^{\rm input}, \ve^{\rm next\_state}, \ve_{t}^{\rm next\_dp}, \ve_{t}^{\rm answer}, \ve_{t}^{\rm sep}, \vn, f^{\rm answer}_t, f^{\rm state}_t,  t, 1  ]
	\end{equation*}
	
	\noindent \textbf{Block 4:} The last block is constructed to implement the final aggregation function and output the final answer. Specifically, the steps are as follows:
	\begin{itemize}
		\item \textbf{Check the flag:} We identify whether the current state is the last state by checking $\sI[f^{\rm state}_t \neq 1]$ by using Lemma~\ref{lemma:indicator}.
		If $f^{\rm state}_t = 1$, then all intermediate DP values have been solved and we need to compute the final answer. 
		\item \textbf{Obtain the needed embeddings:} We collect the DP value embeddings $\ve^{\rm dp}$ of these tokens whose $f^{\rm answer} = 1$, which can be achieved by COPY operation according to Theorem~\ref{linear_copy} with one Mamba layer of size $O(N_{\gA}td)$.
		\item \textbf{Generate the final answer:} Finally, we compute the answer by implementing the aggregation function, which can be achieved by constant-size MLPs according to Assumption~\ref{assum:dp}, thus can also be achieved by several degenerated Mamba layers.
	\end{itemize}
	
	In summary, given a sequence length $t$ and  equipped with CoT, the parameter size required by the Mamba layers to generate the correct answer at each step is $O(\tilde{N}td)$, where $\tilde{N} = \max \{ N, N_{\vs} + N_{\rm dp}, N_{\gA} \}$ is a constant independent of $t$, that is, the size of the Mamba layer scales linearly with $t$.
	Thus, we complete our proof. 
\end{proof}

\subsection{Proof of Theorem~\ref{localDP_Mamba}}\label{app:cot_local}
\begin{theorem}[Perform $m$-locality DP problems with CoT]
	Consider any $m$-locality DP problem and given input sequences that satisfies Assumption~\ref{assum:dp}, for any integer $T \in \sN$, there exists several Mamba layers with size $O(m)$, such that the answer generated by the Mamba layers will be correct when the length of the answer is no more than $T$.
\end{theorem}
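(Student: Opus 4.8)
The plan is to take the four-block construction from the proof of Theorem~\ref{DP_Mamba} and change only the ingredient that forced the per-step size to grow with $T$. In that proof every function evaluation (next state, needed indices, flags, transition and aggregation values) is carried out by constant-size degenerate Mamba blocks via Lemmas~\ref{lemma:multiply}--\ref{lemma:indicator}, so these already cost $O(1)$; the sole source of the $O(\tilde{N} t d)$ bound is the COPY operation of Theorem~\ref{linear_copy}, whose hidden state has dimension $O(t)$ because it stores every past record under mutually orthogonal keys. Hence it suffices to replace each such global COPY by a \emph{local} COPY whose hidden state has dimension $O(m)$. The $m$-locality hypothesis $\vo_i = f(\{\vo_j : i-m \le j < i\})$ guarantees that at every generation step the positions that must be retrieved lie in the window $[i-m+1, i]$, so a buffer holding only the last $m$ records is enough; the remaining constants $N, N_{\vs}, N_{\rm dp}, N_{\gA}, d$ of Assumption~\ref{assum:dp} are absorbed into $O(1)$.

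First I would build the local COPY by reusing $m$ orthonormal keys cyclically. Concretely, augment the embedding as in Theorem~\ref{linear_copy} but with window-relative one-hot vectors: write the current record $\vv_i = \vDelta_i \odot \vx_i$ into slot $\ve_{\,i \bmod m} \in \sR^m$ by taking $\vb_i = \ve_{\,i \bmod m}$, and read with a query built from $\ve_{\,pos(i) \bmod m}$. This keeps the hidden state $\mH_i$ of size $d \times m$, i.e.\ $O(md)$, and embeds into a Mamba block by deactivating the gated branch exactly as in Theorems~\ref{constant_copy} and~\ref{linear_copy}.

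The step I expect to be the main obstacle is that, unlike the global construction, the cyclic reuse of keys makes two positions $j \equiv j' \pmod m$ share a slot, and the selective forgetting of this SSM cannot cleanly overwrite a single rotating slot: the forgetting factor of entry $(k,j)$ is $\exp(\Delta_i^{(j)} a_k)$, whose dependence across buffer rows $k$ is fixed by the parameters $a_k = \mathrm{diag}(\mA)$ and cannot be steered, per step, to annihilate exactly the row $i \bmod m$. I would therefore not attempt exact overwriting but instead install a uniform decay $a \in (0,1)$ (so that $\alpha_j = a^{\,i-j}$ in Eq~(\ref{eq:atten})) and rescale the query by $a^{-(i-pos(i))}$, which is legitimate because $i - pos(i) \le m-1$ is known and bounded independently of $T$. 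Then the read returns
\begin{equation*}
	\vy_i = \vv_{pos(i)} + \sum_{k \ge 1} a^{km}\, \vv_{pos(i)-km},
\end{equation*}
so the contamination from earlier occupants of the same slot is bounded by $\frac{a^m}{1-a^m}\max_j \|\vv_j\|$, which is made smaller than any prescribed $\epsilon$ by a constant $a$ depending only on $m$ and $\epsilon$; the query rescaling stays bounded by $a^{-(m-1)}$ and all parameters remain $O(\mathrm{poly}(M,m,1/\epsilon))$.

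Finally I would substitute this local COPY into Blocks~1--4, so that input-token retrieval, DP-value retrieval and the aggregation collection each cost $O(md)$ rather than $O(td)$, while the function approximations stay constant-size. Since all intermediate DP values, input tokens and the final answer are integers polynomially bounded in the problem size (Assumption~\ref{assum:dp}), choosing $\epsilon$ below the rounding threshold makes the approximate local copies exact after rounding, so the generated answer is correct. The per-step size is therefore $O(m)$ and the total inference cost is $O(mT)$, completing the proof. The one point requiring care in the write-up is to confirm from the locality hypothesis that \emph{every} retrieval demanded by the transition and aggregation functions---not merely the nominal DP-value dependencies---falls inside the length-$m$ window, so that no surviving lookup reaches beyond it.
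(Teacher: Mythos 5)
Your proposal is correct, and at the top level it is the same move the paper makes: keep the Block~1--4 construction of Theorem~\ref{DP_Mamba} verbatim and shrink only the COPY to the length-$m$ window. The difference is what happens below that level. The paper's proof of Theorem~\ref{localDP_Mamba} is a two-sentence assertion that Mamba ``only needs to focus on at most $m$ tokens,'' implicitly suggesting that the Theorem~\ref{linear_copy} construction restricts to $m$ key dimensions; but that construction relies on $\mA = \mO$ (no forgetting) together with globally distinct one-hot keys, and once keys are reused cyclically every position $j \equiv pos(i) \pmod m$ accumulates in the same slot, so the naive restriction returns $\sum_{k \ge 0} \vv_{pos(i)-km}$ rather than $\vv_{pos(i)}$. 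You correctly identify this, and also correctly observe that the diagonal-selective forgetting $\exp(\Delta_i^{(j)} a_k)$ cannot be steered to erase a single rotating slot per step, so exact overwriting is unavailable; your repair --- uniform decay $a$, query rescaling by $a^{-(i-pos(i))}$ (legitimate since $i - pos(i) \le m-1$, and note locality also guarantees the target is the \emph{most recent} occupant of its slot), contamination bounded by $\frac{a^m}{1-a^m}\max_j \|\vv_j\|$, then exactness via the discreteness in Assumption~\ref{assum:dp}, with all parameters $O(\mathrm{poly}(M,m,1/\epsilon))$ --- is precisely the argument needed to make the paper's claim rigorous, at the cost of settling for an approximate rather than exact COPY. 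Your closing caveat is also well taken: the paper defines $m$-locality only over the output tokens $\vo_i$, so one must additionally assume that the input-token lookups $\gD_i$ and the aggregation collection $\gA_\vn$ fall inside the window, since otherwise Blocks~1, 3 and 4 still contain an $O(t)$-size COPY; the paper is silent on this point.
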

\begin{proof}
	The overall proof construction approach is similar to that of Theorem~\ref{DP_Mamba}, with the only difference being that under the assumption of $m$-locality, when performing the COPY operation, the constructed Mamba only needs to focus on at most $m$ tokens preceding the current position. This results in the size of the Mamba layers only needing to be $O(\tilde{N}md)$.
\end{proof}

\subsection{More Details of Experiments}\label{app:ex}

For the copy task experiments, we mainly refer to the setup by \citet{phonebook}. For Transformers, we select the GPT-NeoX architecture \cite{gpt-neox-library} while for Mamba we use the Mamba GitHub repository\cite{Mamba}. 
More specifically, for the left part of Figure \ref{fig:ex}, we configure 10 layers for the Transformer (TF-126M) and 5 layers for TF-63M, with both having a hidden size of 1024 and RoPE \cite{rope} as the positional encoding. 
For Mamba models, we configured 20 layers and a hidden size of 1024 for Mamba-135M, 20 layers and a hidden size of 720 for Mamba-D-67M, 10 layers and a hidden size of 1024 for Mamba-L-67M, and 40 layers and a hidden size of 512 for Mamba-LD-69M.
We use an online sampling batch size of 8 and set the maximum context length to 220, meaning each example often contains multiple instances. 
AdamW\cite{AdamW} is chosen as the optimizer with a learning rate of 1e-5 and weight decay of 0.1. We set $N_{\min} = 10$ and $N_{\max} = 30$ for all models.

For the right part of Figure \ref{fig:ex}, the Transformer and Mamba setups match the aforementioned configurations for TF and Mamba.
Moreover, we set $[N_{\min}, N_{\max}]$ to $[5, 10]$, $[10, 20]$, $[20, 30]$ and $[30, 40]$ for sequence lengths of 10, 20, 30 and 40 respectively.

For the CoT task experiments, we mainly follow the setup of \citet{hedi_CoT, hedi_linear_attention}.
For the LIS task, we investigate different task lengths $L = \{10, 30, 50, 70$ which denotes the length of the input sequence to solve.
While for the Arithmetic task, we select the task length as $L = \{4, 5, 6, 7\}$.
Here, the task length refers to the number of steps required to incrementally compute the arithmetic expression.
An example when $L=4$ is: $4 \times (8 - 6 / 3) = 4 \times (8 - 2) = 4 \times 6 = 24$.
The model dimensions is selected from \( d = \{32, 64, 128, 256\} \) and the number of layers is set to 3 by default for Transformers.
While for Mamba, under each setting, we adjust the number of layers to match or slightly exceed the size of the Transformer within the same $d$, with roughly two Mamba layers corresponding to one Transformer layer.
%We also evaluated the sparse Transformer setting, where each token attends to at most the previous $W$ tokens. By default, we set $W = 7$.
All models are trained for 300 epochs using AdamW\cite{AdamW}  with a learning rate of 1e-4 and weight decay of 0.01.
For the results shown in the Figures~\ref{fig:2}, \ref{app:fig:LIS_heatmap} and \ref{app:fig:Arithmetic_heatmap}, we report the average test accuracy over the last five epochs as the final accuracy.
We run all experiments three times and reported the average results.

Furthermore, our experiments were conducted on four 24GB NVIDIA GeForce RTX 3090 GPUs and were completed within five days.
Additionally, more results on the LIS task are provided in Figures \ref{app:fig:LIS_training} and \ref{app:fig:LIS_heatmap}, while additional results on arithmetic tasks are presented in Figures \ref{app:fig:Arithmetic_training} and \ref{app:fig:Arithmetic_heatmap}.

\subsection{More experiment results}

\begin{figure*}[!htbp]
	\centering
	\begin{subfigure}[t]{0.22\linewidth}
		\centering
		\includegraphics[scale=.20]{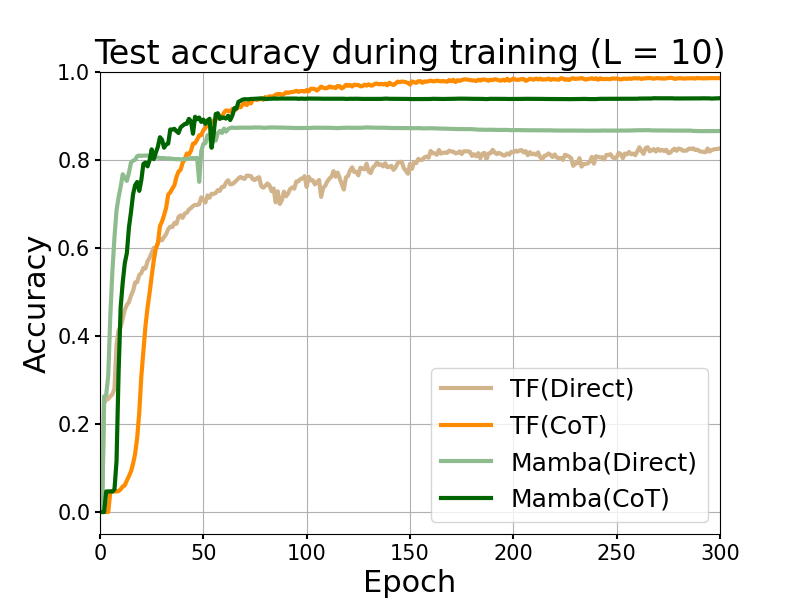}
		%		\subcaption{ }
		%		\label{fig:16-norm}
	\end{subfigure}
	\hspace{0.2cm}
	\begin{subfigure}[t]{0.22\linewidth}
		\centering
		\includegraphics[scale=.20]{pics/figure/LIS/LIS_training_accuracy_L30.png}
		%		\subcaption{ }
		%		\label{fig:a}
	\end{subfigure}
	\hspace{0.2cm}
	\begin{subfigure}[t]{0.22\linewidth}
		\centering
		\includegraphics[scale=.200]{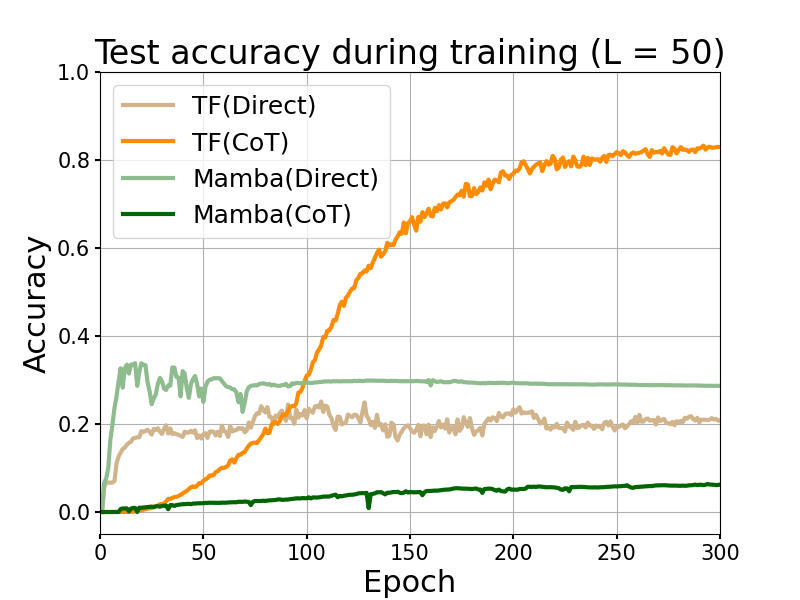}
		%		\subcaption{ }
		%		\label{fig:b}
	\end{subfigure}
	\hspace{0.2cm}
	\begin{subfigure}[t]{0.22\linewidth}
		\centering
		\includegraphics[scale=.200]{pics/figure/LIS/LIS_training_accuracy_L70.png}
		%		\subcaption{ }
		%		\label{fig:c}
	\end{subfigure}
	\vspace{-0cm}
	\caption{
		Test accuracy on LIS tasks during training when the task length $L = 10, 30, 50, 70$ and $d = 256$ (TF denotes Transformer)
	}
	\label{app:fig:LIS_training}
	\vspace*{-0cm}
\end{figure*}

\begin{figure*}[!htbp]
	\centering
	\begin{subfigure}[t]{0.22\linewidth}
		\centering
		\includegraphics[scale=.20]{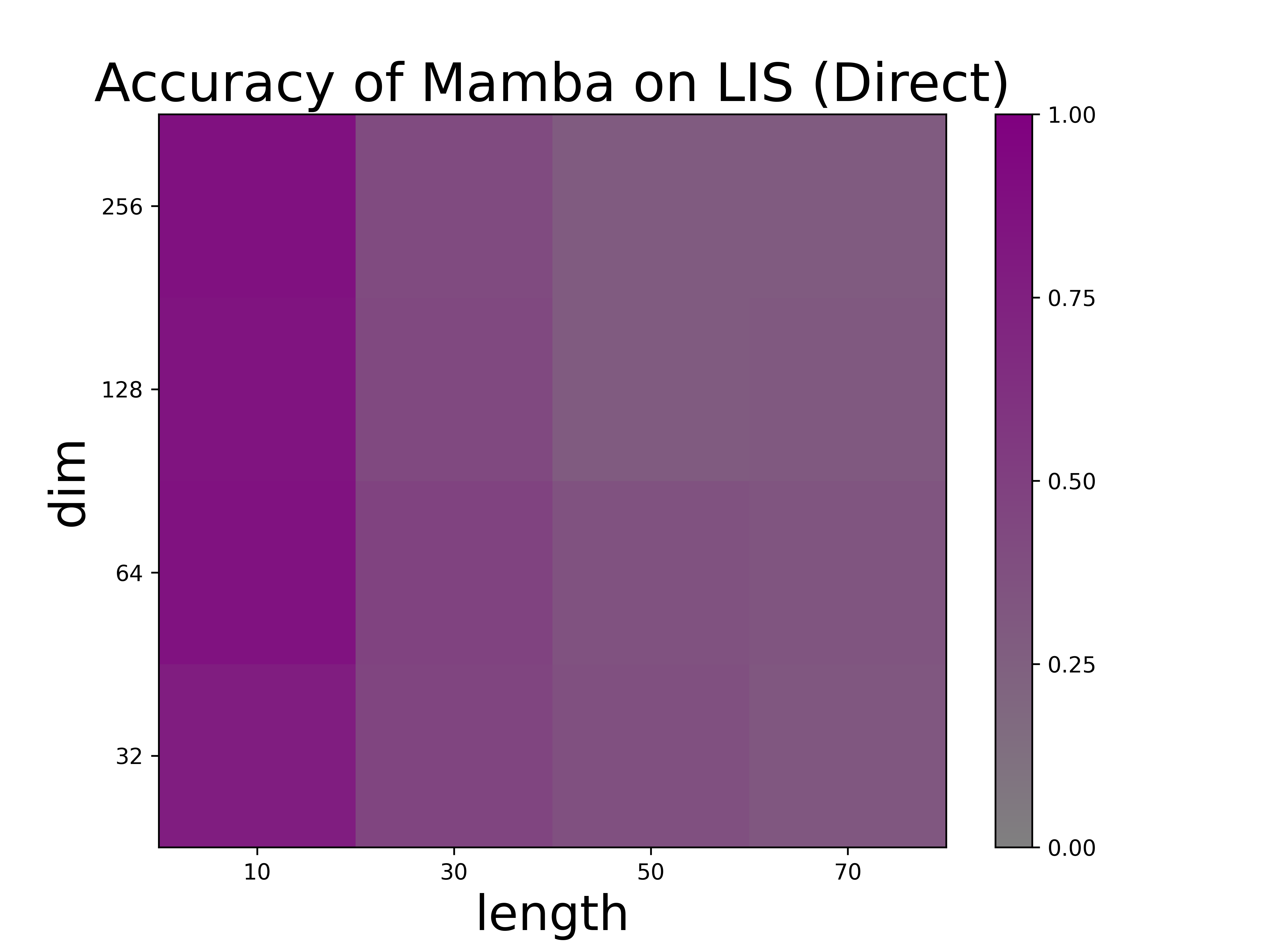}
		%		\subcaption{ }
		%		\label{fig:16-norm}
	\end{subfigure}
	\hspace{0.2cm}
	\begin{subfigure}[t]{0.22\linewidth}
		\centering
		\includegraphics[scale=.20]{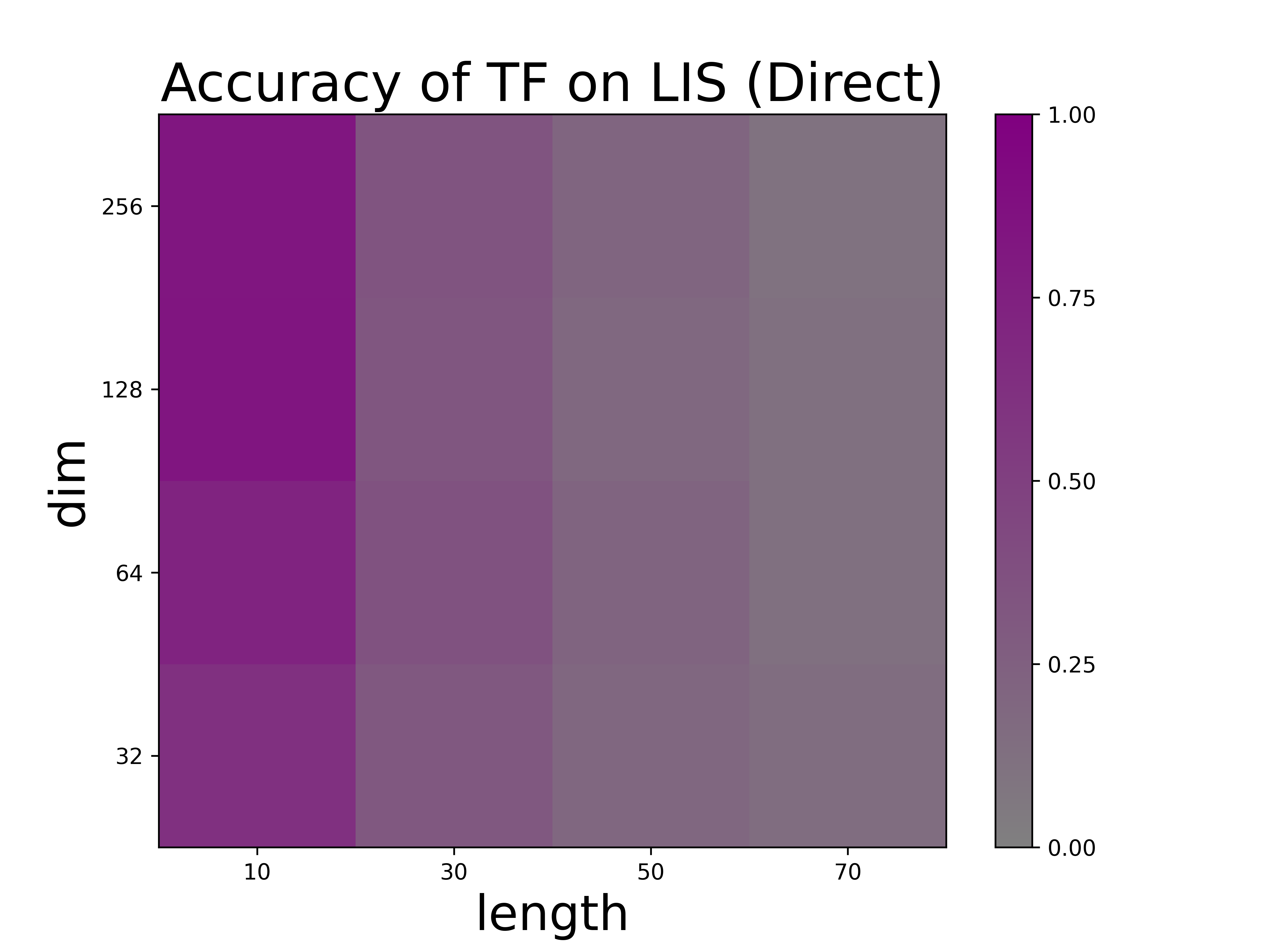}
		%		\subcaption{ }
		%		\label{fig:a}
	\end{subfigure}
	\hspace{0.2cm}
	\begin{subfigure}[t]{0.22\linewidth}
		\centering
		\includegraphics[scale=.200]{pics/figure/LIS/LIS_Mamba_CoT_heatmap.png}
		%		\subcaption{ }
		%		\label{fig:b}
	\end{subfigure}
	\hspace{0.2cm}
	\begin{subfigure}[t]{0.22\linewidth}
		\centering
		\includegraphics[scale=.200]{pics/figure/LIS/LIS_GPT_CoT_heatmap.png}
		%		\subcaption{ }
		%		\label{fig:c}
	\end{subfigure}
	\vspace{-0cm}
	\caption{
		Test accuracy on LIS tasks of Mamba and Transformer across different task lengths and model sizes (with/without CoT).)
	}
	\label{app:fig:LIS_heatmap}
	\vspace*{-0cm}
\end{figure*}

\begin{figure*}[!htbp]
	\centering
	\begin{subfigure}[t]{0.22\linewidth}
		\centering
		\includegraphics[scale=.20]{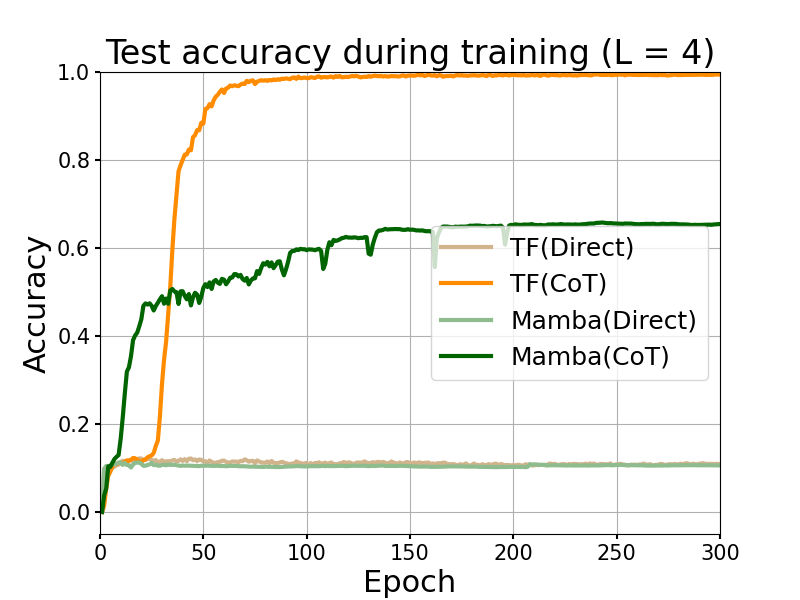}
		%		\subcaption{ }
		%		\label{fig:16-norm}
	\end{subfigure}
	\hspace{0.2cm}
	\begin{subfigure}[t]{0.22\linewidth}
		\centering
		\includegraphics[scale=.20]{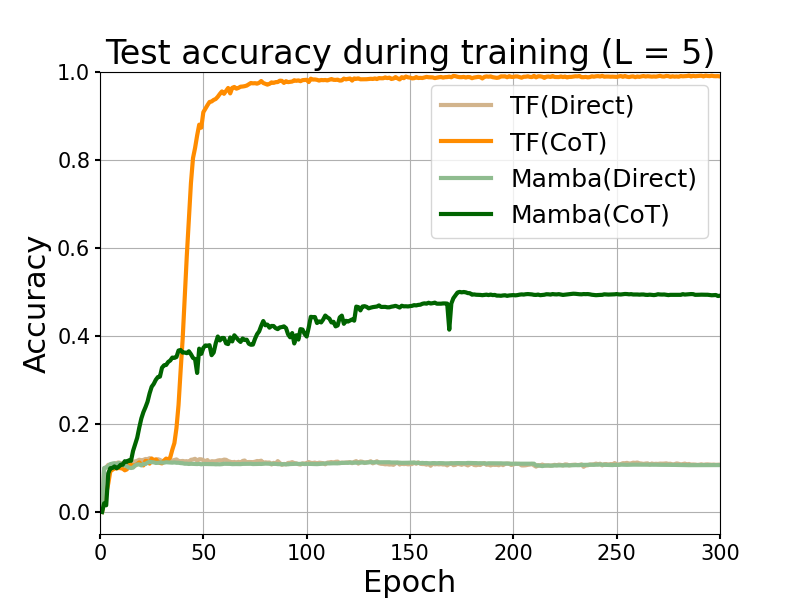}
		%		\subcaption{ }
		%		\label{fig:a}
	\end{subfigure}
	\hspace{0.2cm}
	\begin{subfigure}[t]{0.22\linewidth}
		\centering
		\includegraphics[scale=.200]{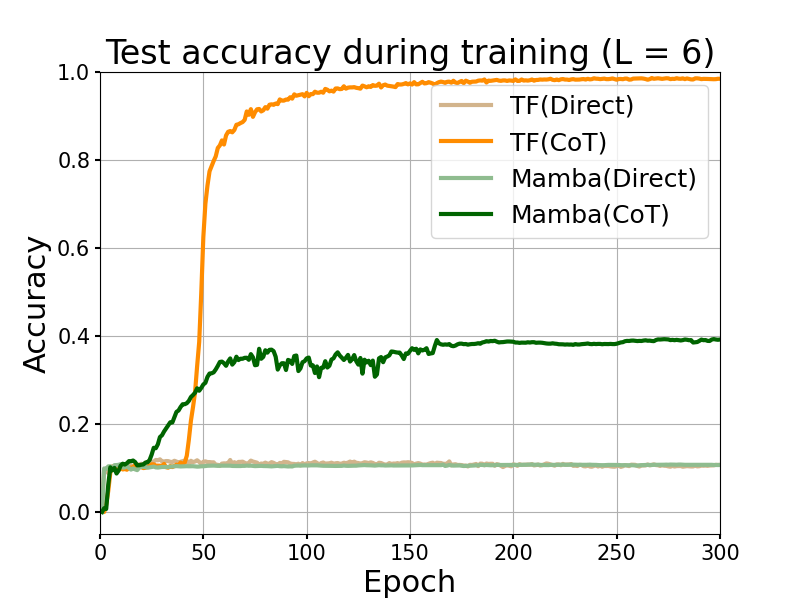}
		%		\subcaption{ }
		%		\label{fig:b}
	\end{subfigure}
	\hspace{0.2cm}
	\begin{subfigure}[t]{0.22\linewidth}
		\centering
		\includegraphics[scale=.200]{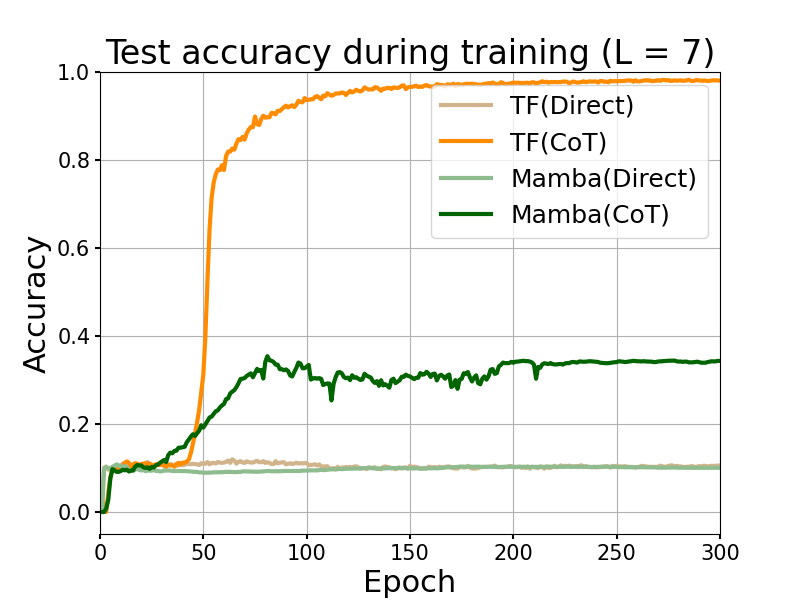}
		%		\subcaption{ }
		%		\label{fig:c}
	\end{subfigure}
	\vspace{-0cm}
	\caption{
		Test accuracy on Arithmetic tasks during training when the task length $L = 4, 5, 6, 7$ and $d = 256$ (TF denotes Transformer). Without CoT, the model’s accuracy is close to that of random guessing.
	}
	\label{app:fig:Arithmetic_training}
	\vspace*{-0cm}
\end{figure*}

\begin{figure*}[!htbp]
	\centering
	\begin{subfigure}[t]{0.22\linewidth}
		\centering
		\includegraphics[scale=.20]{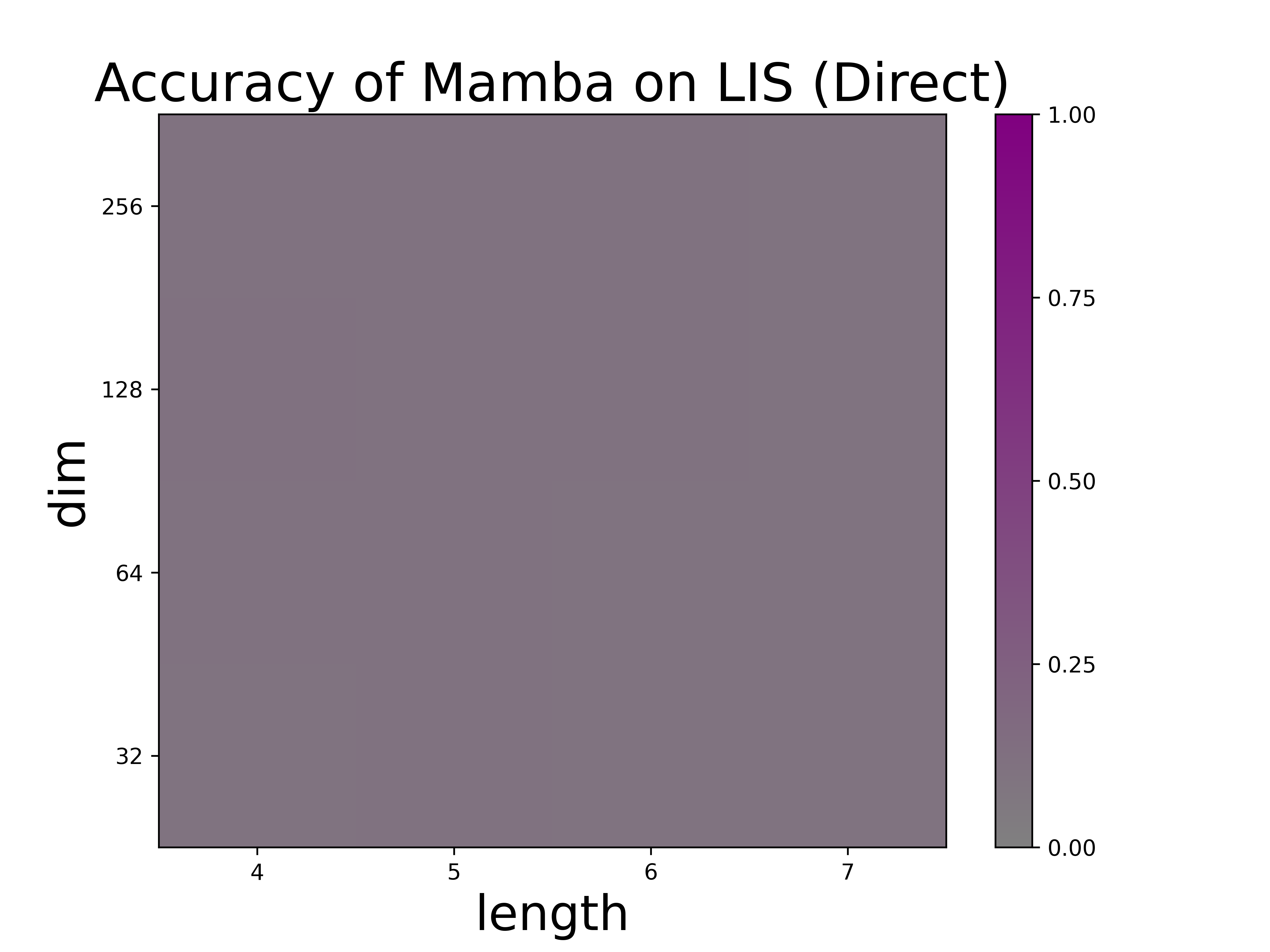}
		%		\subcaption{ }
		%		\label{fig:16-norm}
	\end{subfigure}
	\hspace{0.2cm}
	\begin{subfigure}[t]{0.22\linewidth}
		\centering
		\includegraphics[scale=.20]{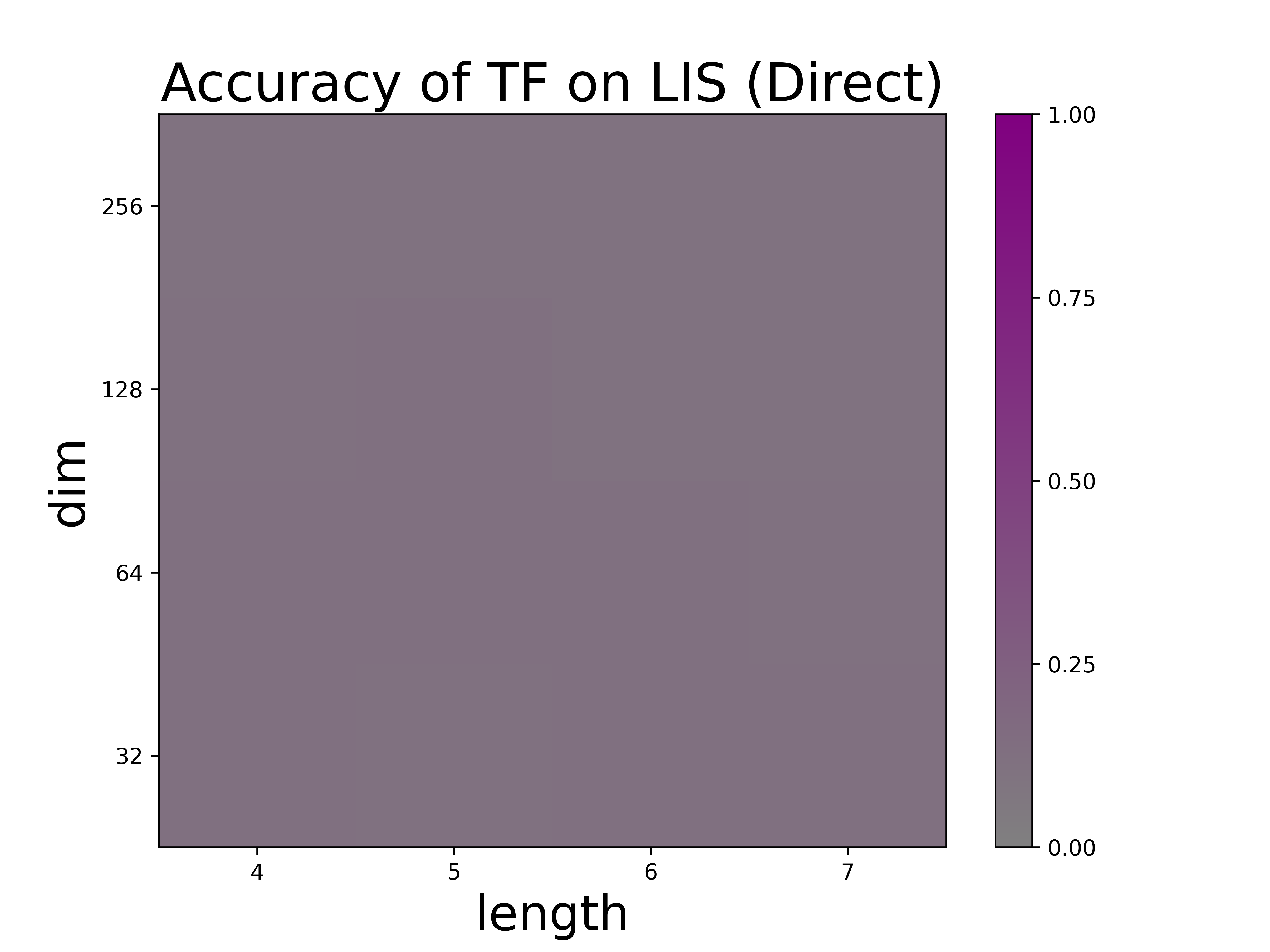}
		%		\subcaption{ }
		%		\label{fig:a}
	\end{subfigure}
	\hspace{0.2cm}
	\begin{subfigure}[t]{0.22\linewidth}
		\centering
		\includegraphics[scale=.200]{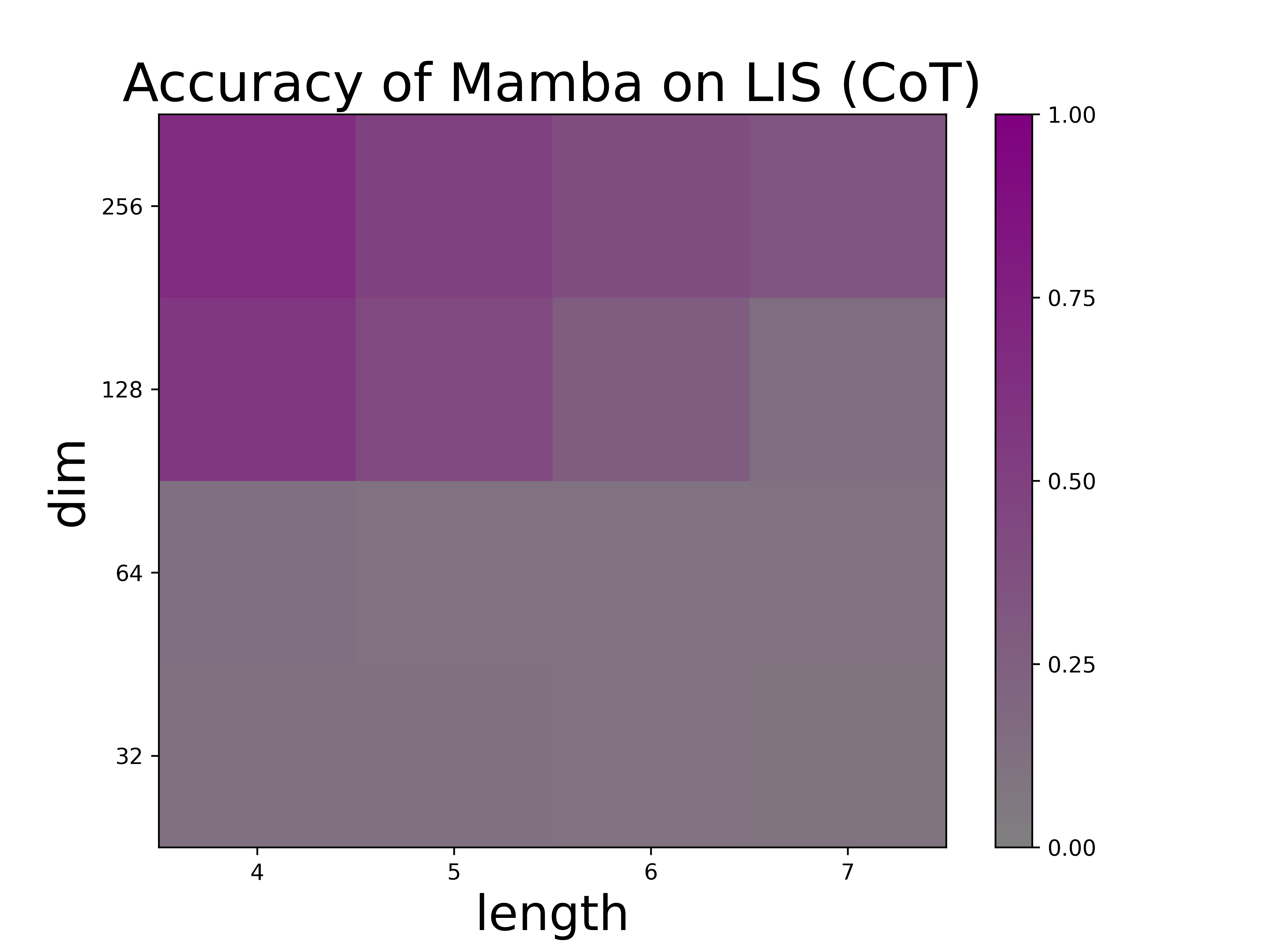}
		%		\subcaption{ }
		%		\label{fig:b}
	\end{subfigure}
	\hspace{0.2cm}
	\begin{subfigure}[t]{0.22\linewidth}
		\centering
		\includegraphics[scale=.200]{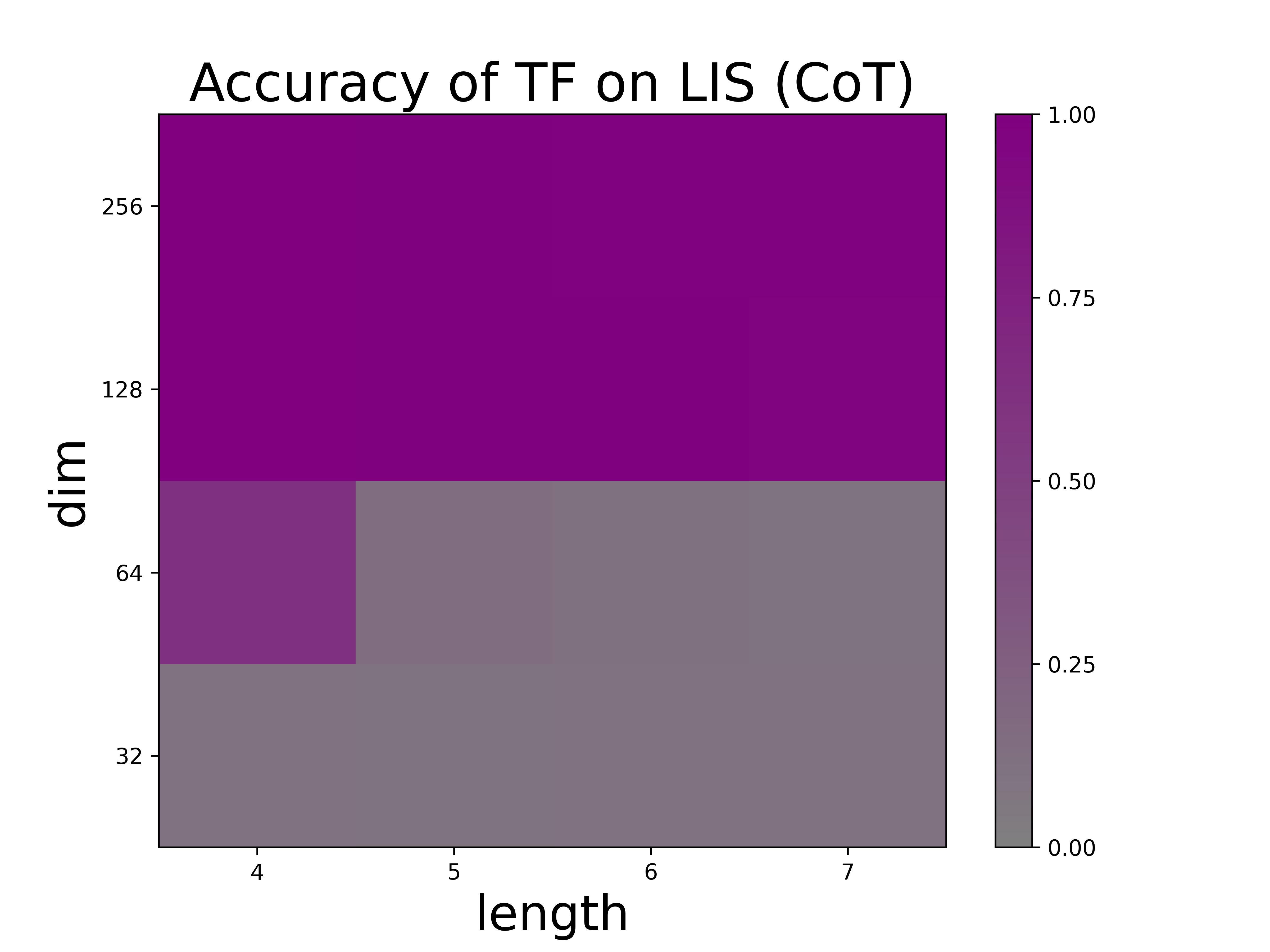}
		%		\subcaption{ }
		%		\label{fig:c}
	\end{subfigure}
	\vspace{-0cm}
	\caption{
		Test accuracy on Arithmetic tasks of Mamba and Transformer across different task lengths and model sizes (with/without CoT).)
	}
	\label{app:fig:Arithmetic_heatmap}
	\vspace*{-0cm}
\end{figure*}

\end{document}